\documentclass[twoside]{article}

\usepackage[preprint]{aistats2027}
\usepackage{booktabs}
\usepackage{multirow}
\usepackage{makecell}

\usepackage{amsmath,amssymb,amsthm,mathtools}
\usepackage{microtype}
\usepackage{enumitem}
\usepackage[hidelinks]{hyperref}
\usepackage{cleveref}
\usepackage{amsthm}
\newtheorem{corollary}{Corollary}
\newtheorem{proposition}{Proposition}
\newtheorem{lemma}{Lemma}
\newtheorem{theorem}{Theorem}

\usepackage{bm}
\usepackage{pgfplots}
\usepackage{subcaption}
\pgfplotsset{compat=1.18}
\usepackage{tikz}
\usetikzlibrary{positioning}
\usetikzlibrary{matrix,calc}
\usepgfplotslibrary{colormaps}
\usepackage{color}

\usetikzlibrary{calc}
\usepackage{graphicx}
\usepgfplotslibrary{groupplots}

\newcommand{\qtext}[1]{\quad\text{#1}\quad}

\newcommand{\E}{\mathbb{E}}

\newcommand{\defeq}{\coloneqq}

\newenvironment{talign*}
 {\let\displaystyle\textstyle\csname align*\endcsname}
 {\endalign}
\newenvironment{talign}
 {\let\displaystyle\textstyle\csname align\endcsname}
 {\endalign}

\usepackage[round]{natbib}

\begin{document}

\runningauthor{Buu Phan, Lester Mackey, David Alvarez-Melis}

\twocolumn[

\aistatstitle{Universal Bounds for Self-Normalized Kullback--Leibler Estimation}

\aistatsauthor{
  Buu Phan$^{1\dagger}$
  \And
  Lester Mackey$^{2}$
  \And
  David Alvarez-Melis$^{2,3}$
}

\aistatsaddress{  $^{1}$University of Toronto  \And  $^{2}$Microsoft Research \And $^{3}$Harvard University} ]

\begin{abstract}
This work studies an entropy-based estimator of Kullback--Leibler (KL) divergence constructed from normalized importance weights. We prove a universal finite-sample bound showing that its expectation can exceed the target–proposal KL divergence by at most 
$\log 2$ for any sample size. We then characterize the KL divergence between the distribution induced by importance-weighted resampling and the proposal, showing that it is upper bounded by both the expectation of the entropy-based estimator and the target--proposal KL divergence. Importantly, these bounds require neither bounded importance weights nor finite higher moments. We apply these results to test-time alignment of generative models via Soft Best-of-$N$ sampling, an importance-sampling-based scaling technique proposed by \cite{verdun2025soft}, and to relative entropy coding, obtaining finite-sample policy-divergence certificates and moment-free bounds on coding cost.
\end{abstract}

\section{Introduction}
Importance sampling is a classical Monte Carlo technique for estimating
expectations under a target distribution $p$ using samples from a more
accessible proposal distribution $q$ \citep{robert1999monte}. Throughout,
$p$ and $q$ denote probability densities or mass functions, as appropriate.
Suppose that $p$ and $q$ have the same support, and define the positive
likelihood ratio $w(x)\defeq p(x)/q(x)$. Given $X_1,\ldots,X_N\overset{\mathrm{i.i.d.}}{\sim}q$, importance sampling
estimates the expected value of a function $f(X)$ under the target
distribution $p$ by reweighting samples from the proposal distribution $q$. In particular, we have:
\begin{talign*}
\E_p[f(X)]
=
\E_q[w(X)f(X)]
\approx
\frac{1}{N}\sum_{i=1}^N w(X_i)f(X_i).
\end{talign*}

When $p$ is known only up to a normalizing constant, self-normalized
importance sampling (SNIS) instead approximates the target expectation as
\begin{talign}
\E_p[f(X)]
\approx
\sum_{i=1}^N \lambda_i f(X_i),
\label{eq:snis}
\end{talign}
where
\(
\lambda_i \defeq w(X_i)\big/\sum_{j=1}^N w(X_j) > 0.
\)
Because common multiplicative factors cancel, SNIS is computable without
the target normalizing constant, and its normalized weights naturally define
a categorical distribution for importance-weighted resampling. Under standard
regularity conditions, the SNIS estimator converges to the target expectation
as $N\to\infty$, see \cite{mcbook}. At finite $N$, however, the random normalization couples the
samples, making nonasymptotic guarantees substantially more challenging
~\citep{agapiou2017importance}. In this work, we study the following normalized-weight estimator of the
target--proposal Kullback-Leibler (KL) divergence $D_{\mathrm{KL}}(p\|q)\defeq \E_p[\log(p(X)/q(X))]$:
\[
\widehat D_N
{\defeq}
\sum_{i=1}^N \lambda_i\log(N\lambda_i)
{=}
D_{\mathrm{KL}}(\bm{\lambda}\|U_N)
{=}
\log N-H(\bm{\lambda}),
\]
where $\bm{\lambda}\defeq(\lambda_1,\ldots,\lambda_N)$, $U_N$ is uniform on
$\{1,\ldots,N\}$, and $H(\cdot)$ denotes entropy. We use natural logarithms throughout unless stated otherwise. This estimator has previously appeared
as an entropy or perplexity criterion in adaptive importance sampling and
sequential Monte Carlo \citep{cappe2008adaptive} and satisfies
\newcommand{\toas}{\stackrel{a.s.}{\to}}
\[
\widehat D_N \toas D_{\mathrm{KL}}(p\|q)
\qtext{and}
\E[\widehat D_N] \to D_{\mathrm{KL}}(p\|q)
\]
when $D_{\mathrm{KL}}(p\|q)<\infty$; see
Appendix~\ref{sec:asymptotic}. Unlike the direct SNIS estimator obtained by setting
$f(X)=\log w(X)$ in \Cref{eq:snis}, $\widehat D_N$ depends only on the
normalized weights and is therefore computable even when the target
normalizing constant is unknown. Moreover,
\[
0\leq \widehat D_N\leq\log N,
\]
since it is the KL divergence between $\bm{\lambda}$ and the uniform
distribution on $N$ points. Consequently, one immediately obtains
$\E[\widehat D_N]\leq\log N$, where the expectation is taken over the
proposal samples $X_{1:N}\sim q^N$ unless otherwise stated.  However, this bound grows with $N$ and does
not reflect the discrepancy between $p$ and $q$. Sharper nonasymptotic
guarantees for SNIS quantities typically require additional assumptions,
such as bounded importance weights or finite higher moments
\citep{agapiou2017importance,phan2024importance}. In contrast, we show that $\E[\widehat D_N]$ admits a universal finite-sample bound even in the absence of these additional assumptions.

\paragraph{Main Results.}
As stated above, we assume throughout that $p$ and $q$ share the same
support and that $D_{\mathrm{KL}}(p\|q)$ is finite, but impose no
boundedness or additional moment assumptions.
Our first result shows that, for any sample size $N$, $\E[\widehat D_N]$
cannot exceed $D_{\mathrm{KL}}(p\|q)$ by more than a universal additive
constant, while converging to $D_{\mathrm{KL}}(p\|q)$ as $N\to\infty$.

\begin{theorem}[Universal Upper Bound]
\label{prop:kl_estimator}
For every $\quad \, N \geq 1$,
\[
\E[\widehat D_N]
<
D_{\mathrm{KL}}(p\|q)+\log 2.
\]
\end{theorem}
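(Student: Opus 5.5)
The plan is to rewrite $\widehat D_N$ in terms of the raw importance weights and then use exchangeability together with a change of measure to turn the finite-sample expectation into a one-particle expectation under $p$. Write $w_i\defeq w(X_i)$, $S\defeq\sum_{j=1}^N w_j$ and $R\defeq\sum_{j\ge 2}w_j$, so that $\lambda_i=w_i/S$ and
\[
\widehat D_N=\sum_{i=1}^N \frac{w_i}{S}\log\frac{N w_i}{S}.
\]
By exchangeability of $X_{1:N}\sim q^N$, $\E[\widehat D_N]=N\,\E_{q^N}\bigl[\tfrac{w_1}{S}\log\tfrac{Nw_1}{S}\bigr]$. Absorbing $w_1=p(X_1)/q(X_1)$ into the law of $X_1$ (legitimate since $p$ and $q$ share support) gives
\[
\E[\widehat D_N]=\E_{X_1\sim p,\;X_{2:N}\sim q^{N-1}}\Bigl[\frac{N}{w_1+R}\log\frac{N w_1}{w_1+R}\Bigr].
\]
Under this measure $R$ is independent of $X_1$ and satisfies $\E[R]=N-1$. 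The target is then to show that this expectation is strictly below $\E_p[\log w_1]+\log 2=D_{\mathrm{KL}}(p\|q)+\log 2$.

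Next, I would split the logarithm as
\[
\log\frac{Nw_1}{w_1+R}=\log w_1-\log\frac{w_1+R}{N}.
\]
I would then bound the integrand pointwise by a function of the pair $(w_1,R)$ whose $R$-dependence is convex. This lets Jensen's inequality be applied in $R$, replacing $R$ by its mean $N-1$, followed by an elementary one-variable inequality in $w_1$ of the form
\[
\frac{N}{w_1+N-1}\log\frac{Nw_1}{w_1+N-1}\;\le\;\log w_1+\log 2 \quad(\text{strictly}).
\]
Only integrability of $\log w_1$ under $p$ would be needed, which is exactly the assumption $D_{\mathrm{KL}}(p\|q)<\infty$; no moments of $w$ under $q$ enter. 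The constant $\log 2$ should emerge from the two regimes $w_1\lesssim R$ and $w_1\gtrsim R$, where $S$ is within a factor $2$ of $\max(w_1,R)$. In the second regime, $\lambda_1\le 1$ gives $\log(N\lambda_1)\le\log N$ while the prefactor $N/S$ is small.

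The main obstacle is this middle step. The map $R\mapsto \frac{N}{w_1+R}\log\frac{Nw_1}{w_1+R}$ is neither convex nor monotone in general: the prefactor $N/S$ and the logarithm move in opposite directions, and the logarithm can be negative. A naive Jensen step therefore fails, and I must find the right majorant. My first attempt would use the elementary inequality $\frac{a}{b}\log\frac{a}{b}\le \frac{a}{b}-1+\frac{a}{b}\log\frac{a}{b}$ type rearrangements, i.e.\ $x\log x\ge x-1$ applied to $x=(w_1+R)/N$ versus $Nw_1$. This converts the troublesome $-\frac{N}{S}\log\frac{S}{N}$ term into $1-\frac{N}{S}$-type terms, which are bounded by $1$ and handle the negative-log region. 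If that loses too much, I would instead case-split on $\{w_1\le R\}$ versus $\{w_1>R\}$, using $S\ge 2\min(w_1,R)$ on one side and $S\ge w_1$ on the other. On each piece I would bound the integrand by $\log w_1$ plus at most $\log 2$, with strict inequality coming from the event having positive probability where the split bound is not tight.
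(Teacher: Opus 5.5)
\textbf{The gap is the Jensen-in-$R$ step, and it cannot be repaired by choosing a better majorant.} Your change of measure is correct: $\E[\widehat D_N]=\E_{p\otimes q^{N-1}}[\phi(w_1,R)]$ with $\phi(w_1,R)=\frac{N}{w_1+R}\log\frac{Nw_1}{w_1+R}$. The one-variable inequality at $R=N-1$ is also true.

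First, the direction is reversed. Concluding $\E_R[g(w_1,R)]\le g(w_1,N-1)$ requires $g$ to be \emph{concave} in $R$; convexity gives the opposite inequality. Likewise, $x\log x\ge x-1$ only yields \emph{lower} bounds on $\frac{N}{S}\log\frac{N}{S}$, which is the term you must bound above.

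More seriously, a concave majorant also fails. Write $u=Nw_1/S$, so $\phi(w_1,R)=\frac{u}{w_1}\log u\ge -\frac{1}{e w_1}$. Then any concave $g(w_1,\cdot)\ge\phi(w_1,\cdot)$ on $(0,\infty)$ is bounded below, hence nondecreasing. Therefore $g(w_1,N-1)\ge\lim_{R\downarrow 0}\phi(w_1,R)=\frac{N}{w_1}\log N$, and
\[
\E_p[g(w_1,N-1)]\ge N\log N\,\E_p[1/w_1]=N\log N,
\]
which is worse than the trivial $\log N$. The majorant must hold near $R=0$ whenever $w$ takes arbitrarily small values under $q$. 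This happens, for example, for two unit-variance Gaussians with nearby means, where $D_{\mathrm{KL}}(p\|q)$ is tiny.

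Put differently: fix $w_1=1$ and take a law of $R$ with mean $N-1$ concentrated near $0$. Then $\E_R\phi(1,R)\to N\log N>\log 2$. So no argument that freezes $w_1$ and uses only independence and $\E R=N-1$ can produce a universal constant.

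Your fallback case split hits the same wall. On $\{w_1>R\}$, the prefactor $N/S$ is small only when $w_1\gg N$. If $w_1$ is of order one and the other $N-1$ weights are all small, $N/S$ is of order $N$. The natural generic control here is $\Pr_q(w_1>R)\le 1/N$, since at most one weight can exceed the sum of the others. It again yields only $\log N$.

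\textbf{What is missing is a mechanism coupling the two laws.} The law of $w_1$ under $p$ is the size-biased law of $w(X)$, $X\sim q$, while $R$ is a sum of unbiased copies of that same variable. Atypically small $R$ must be paid for through this link.

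The paper does this by tilting $q^N$ by $A=N\lambda_1$, so that $\E[\widehat D_N]=D_{\mathrm{KL}}(\widetilde R\|q^N)$, and then applying the chain rule with $Y=X_{2:N}$. This splits the divergence into two terms.
\begin{itemize}
\item \emph{Conditional term.} This is the only ``fixed-$R$'' piece. For each $x_{2:N}$, the conditional density ratio $h_t(w)=\frac{w/(w+t)}{m_t}$ preserves the mean and moves every $w$ toward a fixed point $a>0$. The contraction lemma with $f(t)=t\log t$ then bounds this term by $D_{\mathrm{KL}}(p\|q)$ whatever the law of $R$.
\item \emph{Marginal term.} The small-$R$ effect is isolated here. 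With $B(y)=N\E_{X\sim q}[w(X)/(w(X)+t(y))]$, one has $D_{\mathrm{KL}}(\widetilde r_Y\|q^{N-1})\le\log\E_{q^{N-1}}[B(Y)^2]$. The second moment is computed \emph{exactly} using exchangeability of $N+1$ i.i.d.\ draws, giving $\E[B^2]\le 2N/(N+1)<2$.
\end{itemize}
This identity-based step, rather than any pointwise bound, delivers both the universal $\log 2$ and the strict inequality. Your heuristic argument for strictness is then unnecessary.
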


\noindent
Despite its simplicity, to the best of our knowledge, this bound was previously unknown; its proof is given in Appendix~\ref{app:b}.  The additive $\log 2 \approx 0.69$ term results from a worst-case analysis that combines several inequalities whose worst cases need not occur simultaneously and is therefore unlikely to be tight; our exhaustive experimental search in Appendix \ref{sec:exp} suggests that the optimal universal slack may instead be near $0.15$.

We next consider the distribution induced by SNIS resampling. For each
output, draw a fresh proposal set $X_{1:N}$ and sample
\[
I\mid X_{1:N}\sim\operatorname{Cat}(\bm{\lambda}).
\]
Let $p_N$ denote the distribution of $X_I$ (which depends on the randomness of both
the proposal set and the resampling step). We show that
$D_{\mathrm{KL}}(p_N\|q)$ is upper bounded by both
$\E[\widehat D_N]$ and $D_{\mathrm{KL}}(p\|q)$. 

\begin{theorem}[KL Contraction under SNIS]
\label{prop:2nd_result}
Suppose $D_{\mathrm{KL}}(p\|q)<\infty$. Then, for every $N\geq1$,
\[
D_{\mathrm{KL}}(p_N\|q)
\leq
\min\!\big\{
\E[\widehat D_N],
D_{\mathrm{KL}}(p\|q)
\big\}.
\]
\end{theorem}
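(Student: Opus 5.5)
The plan is to write $p_N$ explicitly as a density relative to $q$ and then prove the two bounds separately, both via convexity or data processing. By exchangeability, $p_N$ has density
\[
p_N(x)=N\,q(x)\,\E\!\left[\frac{w(x)}{w(x)+\sum_{j=2}^N w(X_j)}\right],
\]
where the expectation is over $X_{2:N}\sim q^{N-1}$. Equivalently, $p_N$ is the $x$-marginal of the joint law of $(X_{1:N},I)$ with density $q^N(x_{1:N})\lambda_i$, pushed forward through $(x_{1:N},i)\mapsto x_i$.

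\textbf{Bound by $\E[\widehat D_N]$.} We compare two joint laws on $(X_{1:N},I)$. Let $P$ be $X_{1:N}\sim q^N$ followed by $I\sim\mathrm{Cat}(\bm\lambda)$, and let $Q$ be $X_{1:N}\sim q^N$ followed by $I\sim U_N$. Under $Q$ the output $X_I$ has law exactly $q$, while under $P$ it has law $p_N$. The data processing inequality applied to the map $(x_{1:N},i)\mapsto x_i$ then gives
\[
D_{\mathrm{KL}}(p_N\|q)\le D_{\mathrm{KL}}(P\|Q)=\E_{X_{1:N}}\big[D_{\mathrm{KL}}(\bm\lambda\|U_N)\big]=\E[\widehat D_N],
\]
by the chain rule, since the $X_{1:N}$-marginals coincide. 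This step is routine.

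\textbf{Bound by $D_{\mathrm{KL}}(p\|q)$.} Here we use a different coupling. Let $\tilde P$ be: draw $I\sim U_N$, draw $X_I\sim p$, and draw $X_j\sim q$ independently for $j\ne I$. Its joint density is $\tfrac1N\,w(x_i)\prod_j q(x_j)$. Summing over $i$, the $X_{1:N}$-marginal is $\bar w\,q^N$, where $\bar w=\tfrac1N\sum_j w(x_j)$, and the conditional of $I$ given $X_{1:N}$ is exactly $\mathrm{Cat}(\bm\lambda)$. This holds when $p=w q$ is normalized; if $p$ is only known up to a constant, that constant cancels in $\bm\lambda$. Hence $p_N$ is the law of $X_I$ under the reweighted process $X_{1:N}\sim\bar w\,q^N$, $I\mid X\sim\mathrm{Cat}(\bm\lambda)$. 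Equivalently,
\[
p_N(x)=q(x)\,\E\big[\bar w\,\tfrac{N\lambda_1}{\bar w}\cdots\big],
\]
but the cleaner route is the following. Under $\tilde P$, the marginal of $X_I$ is exactly $p$, which is not what we need. So we take the convexity route instead. The density ratio is
\[
\frac{p_N(x)}{q(x)}=\E\!\left[\frac{w(x)}{\tfrac1N\big(w(x)+\sum_{j\ge2}w(X_j)\big)}\right].
\]
We then view $p_N$ as a mixture: $p_N=\E_{X_{2:N}}[r_{X_{2:N}}]$, where $r_{X_{2:N}}(x)\propto q(x)\,w(x)/(w(x)+S)$ with $S=\sum_{j\ge2}w(X_j)$, suitably weighted. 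The cleaner approach is to use joint convexity of KL, or the Gibbs variational inequality $D_{\mathrm{KL}}(p_N\|q)=\sup_g\{\E_{p_N}g-\log\E_q e^g\}$, and to show $\E_{p_N}[g]-\log\E_q e^g\le D_{\mathrm{KL}}(p\|q)$.

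Concretely, the route I would try first is to write $D_{\mathrm{KL}}(p_N\|q)=\E_{p_N}[\log(p_N/q)]$. The ratio $h(x)=p_N(x)/q(x)=\E_S[Nw(x)/(w(x)+S)]$ is a nondecreasing, concave function of $w(x)$. Because $p_N$ is obtained from $p$ by moving mass toward $q$ (each selected point is $p$-distributed but is competing against $q$-draws), I expect $p_N$ to be a ``$q$-smoothing'' of $p$. The goal is to express $p_N=\mathcal K p$ for a Markov kernel $\mathcal K$ satisfying $\mathcal K q=q$, after which data processing yields $D_{\mathrm{KL}}(p_N\|q)\le D_{\mathrm{KL}}(p\|q)$. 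The candidate kernel comes from $\tilde P$. Given $X_I=y$ drawn from $p$ at a uniformly random position, with the other coordinates $\sim q$, we resample the index $J\sim\mathrm{Cat}(\bm\lambda)$ and output $X_J$. This is an independent-Metropolis/Gibbs-type ``swap'' step. By the computation above, $(X_{1:N},J)$ has the same law as $(X_{1:N},I)$ under $\tilde P$, so $X_J\sim p$ again, which does not give $p_N$. The correct kernel should therefore be built in reverse: start with $y\sim p$ and place it among $N-1$ fresh $q$-draws, and output $X_J$ with $J\sim\mathrm{Cat}(\bm\lambda)$. The trouble is that this kernel maps $p$ to $p$ rather than to $p_N$.

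\textbf{Main obstacle.} The main obstacle is finding this kernel, or else using convexity directly. As a fallback I would prove $D_{\mathrm{KL}}(p_N\|q)\le D_{\mathrm{KL}}(p\|q)$ by monotone coupling. Since $h(x)$ is an increasing function of $w(x)$, the random variable $\log h$ is comonotone with $\log w$. Using $D_{\mathrm{KL}}(p_N\|q)=\E_{p_N}\log h\le\E_{p_N}\log w-\log\E_q[w]$ from the Gibbs variational formula with $g=\log h$ is not quite the right form; instead I would set $g=\log h$ in the variational formula for $D_{\mathrm{KL}}(p\|q)\ge\E_p g-\log\E_q e^g=\E_p\log h$, since $\E_q h=1$. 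It then suffices to show $\E_{p_N}\log h\le\E_p\log h$. Both $p$ and $p_N$ have densities relative to $q$ that are increasing in $w$, namely $w$ and $h$ respectively, with $h$ a concave transform of $w$. So $p$ dominates $p_N$ in the stochastic order of $w(X)$: the ratio $w/h\propto (w+S)/N$ in expectation is increasing in $w$, which gives a monotone likelihood ratio. Because $\log h$ is increasing in $w$, this yields $\E_{p_N}\log h\le\E_p\log h\le D_{\mathrm{KL}}(p\|q)$. Verifying that $w/h$ is increasing in $w$, i.e.\ that $w\mapsto\E_S[w/(w+S)]$ grows more slowly than linearly (true since $w/(w+S)$ is concave and vanishes at $0$), is the key check. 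Combining this with the first bound gives the minimum.
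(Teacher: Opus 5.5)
Your first bound is the paper's argument: data processing for $(x_{1:N},i)\mapsto x_i$ between the $\mathrm{Cat}(\bm\lambda)$ and uniform index laws.

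For the second bound, your fallback argument is correct but takes a different route from the paper. Write $h(t)=N\,\E[t/(t+S)]$ with $S=\sum_{j\ge2}W_j$.
\begin{itemize}
\item The paper shows that $h(t)$ always lies between $t$ and a fixed point $c$, obtained from the intermediate value theorem using $\E_q[h(W)-W]=0$. It then applies a tangent-line contraction lemma at $c$ to $f(t)=t\log t$. That lemma is really a convex-order statement, $\E_q[f(p_N/q)]\le\E_q[f(p/q)]$ for every differentiable convex $f$, so the paper's route contracts all such $f$-divergences at once.
\item You use the same structural facts: $h$ is increasing and $h(t)/t=N\,\E[1/(t+S)]$ is decreasing. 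You package them as a monotone likelihood ratio $dp/dp_N=w/h(w)$. Chebyshev's association inequality under $p_N$, using $\E_{p_N}[w/h]=1$, then gives $\E_{p_N}[\log h]\le\E_p[\log h]$. Jensen gives $\E_p[\log(h/w)]\le\log\E_q[h]=0$.
\end{itemize}
Your route avoids the fixed point and the lemma entirely. Because $\E_p[\log(w/h)]=D_{\mathrm{KL}}(p\|p_N)$, it also yields the sharper inequality $D_{\mathrm{KL}}(p_N\|q)+D_{\mathrm{KL}}(p\|p_N)\le D_{\mathrm{KL}}(p\|q)$. The price is that it is specific to KL.

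For a clean write-up, fix three things:
\begin{itemize}
\item Delete the kernel and $\tilde P$ detour. The sentence asserting that $p_N$ is the law of $X_I$ under the reweighted process is false, as you notice yourself a line later.
\item Replace the heuristic ``$w/h\propto(w+S)/N$ in expectation'' by the exact identity $w/h(w)=1/(N\,\E[1/(w+S)])$.
\item Handle integrability. $\E_p[(\log h)^-]$ is not known to be finite in advance, so apply the association inequality to the bounded nondecreasing truncations $\max\{\log h,-k\}$ and let $k\to\infty$. Monotone convergence is legitimate because $\log h\le\log N$. This also shows $\E_p[\log h]\ge D_{\mathrm{KL}}(p_N\|q)\ge 0$ is finite, which the Jensen step needs.
\end{itemize}
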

The proof of \Cref{prop:2nd_result} is given in
Appendix~\ref{thm:2}. The bound
$D_{\mathrm{KL}}(p_N\|q)\leq D_{\mathrm{KL}}(p\|q)$
is intuitively plausible since the candidates are drawn from $q$, but establishing it requires constructing a nontrivial contraction lemma. Meanwhile, $\E[\widehat D_N]$ provides a straightforwardly estimated upper
bound on $D_{\mathrm{KL}}(p_N\|q)$ 
using only the normalized importance
weights, without requiring explicit evaluation of $p_N$. This is particularly
useful in applications such as model alignment, where the divergence of the
finite-sample resampled policy from the proposal is of direct interest. Importantly, \Cref{prop:kl_estimator} further guarantees that
$\E[\widehat D_N]$ cannot substantially exceed the population-level
divergence. This property is especially valuable when 
$N \gg \exp({D_{\mathrm{KL}}(p\|q)})$ as  
the crude $\log N$ upper bound then becomes unboundedly larger than $D_{\mathrm{KL}}(p\|q)$. 

We complement this upper bound with a straightforwardly estimated lower bound, providing
a practical way to approximate $D_{\mathrm{KL}}(p_N\|q)$ without explicitly
evaluating $p_N$. For $Y\sim p_N$ and $M\geq1$, define
\[
\widehat\theta_M(Y)
\defeq
\frac{N}{M}\sum_{m=1}^M
\frac{w(Y)}
{w(Y)+\sum_{j=2}^N w(X_j^{(m)})},
\]
where $X_j^{(m)}\overset{\mathrm{i.i.d.}}{\sim}q$ are auxiliary samples
independent of $Y$. Here, $\widehat\theta_M(Y)$ is a Monte Carlo estimator
of the density ratio $p_N(Y)/q(Y)$, obtained by averaging the SNIS selection
probability of $Y$ over fresh auxiliary proposal sets.

\begin{theorem}[KL Lower Bound]
\label{prop:snis-lower}
For all $N,M\geq1$,
\[
\E_{Y\sim p_N}\!\left[
\E_{X_{2:N}^{(1:M)}\sim q}
\left[\log\widehat\theta_M(Y)\right]
\right]
\leq
D_{\mathrm{KL}}(p_N\|q),
\]
where the inner expectation is over the $M$ independent auxiliary proposal
sets used to construct $\widehat\theta_M(Y)$. Moreover, for independent $Y_1,\ldots,Y_B\sim p_N$, with independent
auxiliary proposal sets for each $Y_b$, define
\begin{talign}
\widehat L_{B,M}
\defeq
\max(0,\frac1B\sum_{b=1}^B\log\widehat\theta_M(Y_b)) .
\end{talign}
Then, almost surely,
\[
\lim_{B\to\infty}\lim_{M\to\infty}\widehat L_{B,M}
=
D_{\mathrm{KL}}(p_N\|q).
\]
\end{theorem}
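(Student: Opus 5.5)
The plan is to first write down $p_N$ explicitly. By exchangeability of $X_{1:N}$, the law of $X_I$ has density
\[
p_N(y)=\sum_{i=1}^N \E\Big[\mathbf 1\{X_i\in dy\}\,\tfrac{w(X_i)}{\sum_j w(X_j)}\Big]\Big/dy = N\,q(y)\,\E_{X_{2:N}\sim q}\Big[\tfrac{w(y)}{w(y)+\sum_{j=2}^N w(X_j)}\Big].
\]
Hence $\theta(y)\defeq p_N(y)/q(y)=\E_{X_{2:N}}[\widehat\theta_1(y)]$. Moreover, $\widehat\theta_M(y)$ is an average of $M$ i.i.d.\ unbiased copies, so $\E[\widehat\theta_M(y)]=\theta(y)$ for every $y$.

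\textbf{Lower bound.} Fix $y$ and apply Jensen's inequality (concavity of $\log$) to the inner expectation:
\[
\E_{X^{(1:M)}_{2:N}}[\log\widehat\theta_M(y)]\le\log\theta(y)=\log\frac{p_N(y)}{q(y)}.
\]
Integrating over $y\sim p_N$ gives the first claim, since $\E_{p_N}[\log(p_N/q)]=D_{\mathrm{KL}}(p_N\|q)$. I need to check integrability so that the outer expectation is well defined. First, $\widehat\theta_M\le N$, so the positive part is bounded. Second, $D_{\mathrm{KL}}(p_N\|q)\le D_{\mathrm{KL}}(p\|q)<\infty$ by \Cref{prop:2nd_result}, so the right-hand side is finite. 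The left-hand side may equal $-\infty$, which is harmless for an upper bound. This also requires $N\ge1$ with $w>0$, so that $\widehat\theta_M>0$ and the logarithm is finite pointwise.

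\textbf{Consistency.} For fixed $Y_b$, the strong law of large numbers over $m$ gives $\widehat\theta_M(Y_b)\to\theta(Y_b)$ almost surely as $M\to\infty$, because the summands lie in $(0,N]$. Since $\theta(Y_b)>0$ (it is the expectation of a strictly positive variable) and $\log$ is continuous on $(0,\infty)$, we get $\log\widehat\theta_M(Y_b)\to\log\theta(Y_b)$ almost surely. With finitely many $b$ and the continuity of $t\mapsto\max(0,t)$, this yields
\[
\lim_M\widehat L_{B,M}=\max\Big(0,\tfrac1B\sum_b\log\theta(Y_b)\Big)\quad\text{a.s.}
\]
Next apply the SLLN in $B$ to the i.i.d.\ variables $\log\theta(Y_b)$, $Y_b\sim p_N$, whose mean is $D_{\mathrm{KL}}(p_N\|q)\in[0,\infty)$. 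The positive part is bounded by $\log N$. The negative part is integrable because $\E_{p_N}|\log\theta|=D_{\mathrm{KL}}+2\E_{p_N}[(\log\theta)^-]$ and $\E_{p_N}[(\log\theta)^-]=\E_q[\theta(\log\theta)^-]\le 1/e$. So the averages converge to $D_{\mathrm{KL}}(p_N\|q)\ge0$, and applying $\max(0,\cdot)$ leaves the limit unchanged.

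\textbf{Main obstacle.} The step needing the most care is making the almost-sure statement rigorous for the iterated limit, i.e.\ choosing a single probability-one event. I would realize all auxiliary sets on one product space and intersect the countably many null sets: one per $b$ for the $M$-limit, plus one for the $B$-limit. The identification of $p_N/q$ via exchangeability also needs a short measure-theoretic justification using $\E[f(X_I)]=\sum_i\E[\lambda_i f(X_i)]$. Neither step is deep.
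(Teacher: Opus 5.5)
Your proposal is correct and follows essentially the same route as the paper. The paper uses unbiasedness of $\widehat\theta_M(y)$ for $p_N(y)/q(y)$ together with Jensen's inequality for the bound. For the limit it applies the strong law in $M$ (with continuity of $\log$ and of $\max(0,\cdot)$), then the strong law in $B$, justified by the same integrability estimates $\E_{p_N}[(\log(p_N/q))^+]\le\log N$ and $\E_{p_N}[(\log(p_N/q))^-]\le 1/e$. Your extra care in fixing a single probability-one event for the iterated limit is a point the paper leaves implicit.
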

The proof is shown in Appendix \ref{thm3:proof}. Together with \Cref{{prop:2nd_result}}, this result provides a practical way to
bracket and approximate $D_{\mathrm{KL}}(p_N\|q)$ without explicitly
evaluating $p_N$. As $M \to \infty$ followed by $B \to \infty$, the lower-bound estimator
converges almost surely to the true divergence. Clipping at zero ensures that the estimator, like the $D_{\mathrm{KL}}$ quantity being estimated, is always nonnegative. 
Related auxiliary-sampling estimators have previously been used to evaluate implicit importance-weighted distributions in 
variational autoencoder literature \citep{cremer2017reinterpreting}.

We empirically evaluate our bounds in Appendix~\ref{sec:exp} and illustrate their implications in two modern applications of importance sampling: test-time alignment via Soft Best-of-$N$ sampling and approximate relative entropy coding. In both settings, our results require neither bounded importance weights
nor higher-order moment conditions, while yielding tighter bounds across
relevant regimes.

\section{Divergence Guarantees for Soft Best-of-\texorpdfstring{$N$}{N}}

\subsection{Test-Time Alignment with KL Control} 
A central goal of language-model alignment is to steer a base policy toward high-reward outputs while controlling distributional shift from the original model. Test-time alignment \citep{stiennon2020learning, huang2025best} provides an attractive alternative to fine-tuning by leaving the base policy fixed and using additional computation at generation
time to favor high-reward responses. Since aggressive reward optimization can move the resulting distribution substantially away from the base policy, KL regularization provides a natural mechanism for trading off reward improvement
against fidelity to the original model.

Concretely, suppose we have a base policy $\pi_0(y\mid x)$ from which we
can sample candidate responses and a reward function
$r(x,y)\in\mathbb{R}$ that measures the desirability of response $y$ for
input $x$. The reward function may encode human preferences, correctness,
task performance, or other desired behavior. Our goal is to generate high-reward responses while controlling the
distributional shift from the base policy
\citep{csiszar2003information,verdun2025soft}. For each prompt $x$, such policy is the solution of the following KL-constrained optimization problem:
\[
\begin{aligned}
\max_{\pi}\quad
& \E_{Y\sim\pi(\cdot\mid x)}[r(x,Y)] \\
\text{s.t.}\quad
& D_{\mathrm{KL}}\!\left(
\pi(\cdot\mid x)\|\pi_0(\cdot\mid x)
\right)\leq \epsilon.
\end{aligned}
\]
When the constraint is active, the optimal population Gibbs policy is
an exponential tilt of the base policy:
\begin{equation}
\pi_\beta(y\mid x)
=
\frac{\pi_0(y\mid x)e^{\beta r(x,y)}}
{\E_{Y\sim\pi_0(\cdot\mid x)}[e^{\beta r(x,Y)}]}, \label{tilted_policy}    
\end{equation}
where $\beta>0$ is the Lagrange parameter associated with the KL constraint
and determines a point on the reward--KL trade-off
(see \citealp{verdun2025soft} and Appendix~\ref{sBoN_tilted_policy} for a
detailed derivation).
The normalizing constant in the denominator is
generally intractable, so direct sampling from $\pi_\beta$ is not available.
Thus, the test-time alignment goal is to approximately sample from 
the reward-tilted policy $\pi_\beta$ using only samples from the base policy
$\pi_0$ and evaluations of the reward $r(x,y)$. As a result, for a given choice of $\beta$, test-time alignment naturally gives
rise to three computational objectives: approximating the population Gibbs
policy $\pi_\beta$ with a policy $\pi_{N,\beta}$ from which we can sample
using $N$ draws from $\pi_0$, evaluating its corresponding expected reward,
\(
\mathbb{E}_{Y\sim\pi_{N,\beta}(\cdot\mid x)}[r(x,Y)],
\)
and quantifying the population distributional shift of this approximation,
\(
D_{\mathrm{KL}}\!\left(
\pi_{N,\beta}(\cdot\mid x)
\middle\|
\pi_0(\cdot\mid x)
\right).
\)
We construct such a policy using self-normalized importance sampling,
a procedure known in this setting as Soft Best-of-$N$.

\begin{figure*}[t]
\centering
\begin{tikzpicture}

\begin{scope}[
    xshift=0cm,
    yshift=2.05cm,
    scale=0.78,
    transform shape,
    >=stealth,
    box/.style={
        draw,
        rectangle,
        minimum height=0.78cm,
        minimum width=2.35cm,
        align=center,
        font=\small
    },
    arrow/.style={->, thick},
    every node/.style={font=\small}
]

\node[box] (base) at (0,2.15)
    {Base policy $\pi_0$};

\node[box] (pool) at (0,-2.15)
    {$Y_{1:N}\sim\pi_0$};

\node[box, fill=orange!20] (gibbs) at (4.8,2.15)
    {$\pi_\beta \propto \pi_0(y\mid x)e^{\beta r(x,y)}$};

\node[box] (reward) at (4.8,0)
    {Reward $r(x,y)$};

\node[box, fill=blue!15, minimum width=2.7cm] (weights) at (4.8,-2.15)
    {$\lambda_i\propto e^{\beta r(x,Y_i)}$};

\node[box, minimum width=0.9cm] (output) at (8.8,-2.15)
    {$Y_I$};

\node at (4.8,3.35)
    {Target Gibbs Policy};

\node at (4.8,-3.35)
    {Soft Best-of-$N$};

\draw[arrow]
    (base.east) -- (gibbs.west);

\draw[arrow]
    (base.south)
    --
    node[left, xshift=-2pt] {$N$ samples}
    (pool.north);

\draw[arrow]
    (reward.north) -- (gibbs.south);

\draw[arrow]
    (reward.south) -- (weights.north);

\draw[arrow]
    (pool.east) -- (weights.west);

\draw[arrow]
    (weights.east)
    --
    node[above, yshift=2pt] {$I\sim\operatorname{Cat}(\lambda)$}
    (output.west);

\end{scope}

\begin{axis}[
    at={(8.65cm,0.18cm)},
    anchor=south west,
    width=7.05cm,
    height=4.65cm,
    tick label style={font=\scriptsize},
    label style={font=\scriptsize},
    xmin=1,
    xmax=32,
    ymin=0,
    ymax=3.2,
    xlabel={Number of candidates $N$},
    ylabel={KL divergence (nats)},
    xtick={1,8,16,24,32},
    ytick={0,0.5,1,1.5,2,2.5,3},
    ymajorgrids=true,
    xmajorgrids=false,
    grid style={black!10},
    axis lines=left,
    tick align=outside,
    clip=true,
    legend style={
        draw=none,
        fill=none,
        font=\scriptsize,
        at={(0.5,1.035)},
        anchor=south,
        legend columns=2,
        /tikz/every even column/.append style={column sep=5pt},
        row sep=0.5pt,
    },
]

\addplot+[
    blue!75!black,
    mark=*,
    mark size=0.7pt,
    line width=0.9pt,
]
coordinates {
    (1,0.000000)
    (2,0.167078)
    (3,0.349687)
    (4,0.479162)
    (5,0.567780)
    (6,0.628655)
    (7,0.671069)
    (8,0.701180)
    (9,0.723005)
    (10,0.739171)
    (11,0.751405)
    (12,0.760862)
    (13,0.768321)
    (14,0.774316)
    (15,0.779220)
    (16,0.783837)
    (18,0.791811)
    (20,0.798343)
    (24,0.808071)
    (28,0.814593)
    (32,0.818965)
};
\addlegendentry{
    $D_{\mathrm{KL}}(\pi_{N,\beta}\Vert\pi_0)$
}

\addplot+[
    orange!90!black,
    dashed,
    mark=square*,
    mark size=0.65pt,
    line width=0.9pt,
]
coordinates {
    (1,0.000000)
    (2,0.422441)
    (3,0.609278)
    (4,0.705393)
    (5,0.759212)
    (6,0.790814)
    (7,0.809894)
    (8,0.821597)
    (9,0.828823)
    (10,0.833278)
    (11,0.835991)
    (12,0.837598)
    (13,0.838498)
    (14,0.838944)
    (15,0.839100)
    (16,0.837765)
    (18,0.835548)
    (20,0.833821)
    (24,0.831459)
    (28,0.830027)
    (32,0.829159)
};
\addlegendentry{
    $\mathbb{E}[\widehat D_N]$ (Our)
}

\addplot+[
    black,
    dashed,
    mark=none,
    line width=0.85pt,
]
coordinates {
    (1,0.827735)
    (32,0.827735)
};
\addlegendentry{
    $D_{\mathrm{KL}}(\pi_\beta\Vert\pi_0)$
}

\addplot+[
    red!65!black,
    dashdotted,
    mark=none,
    line width=0.95pt,
]
coordinates {
    (1,1.520882)
    (32,1.520882)
};
\addlegendentry{
    $D_{\mathrm{KL}}(\pi_\beta\Vert\pi_0)+\log 2$
}

\addplot+[
    green!55!black,
    solid,
    mark=none,
    line width=1.05pt,
    domain=1:32,
    samples=220,
]
{ln(x/(1+(x-1)*exp(-12)))};
\addlegendentry{Aminian et al.}

\node[
    green!45!black,
    anchor=south east,
    font=\scriptsize,
    fill=white,
    inner sep=1.2pt
] at (axis cs:31.3,2.78)
    {$\to \beta R_{\max}$};

\end{axis}

\end{tikzpicture}

\caption{
Soft Best-of-$N$ and its KL guarantees. The population KL is the limiting
SBoN divergence as $N\to\infty$. Our estimator $\E[\widehat D_N]$ is
tighter than the guarantee of \citet{aminian2026best} and
can overshoot the population KL by at most $\log 2$. In contrast, the bound of \citet{aminian2026best} depends only on the reward
range and does not account for the geometry of the reward distribution under
the base policy. It converges to $\beta R_{\max}$, which is $12$ in this
example, corresponding to the first column and last row of
\Cref{fig:softbon-alignment}.
}
\label{fig:softbon}
\end{figure*}

\subsection{Soft Best-of-\texorpdfstring{$N$}{N} Alignment and KL Guarantees}
The Soft Best-of-$N$ (SBoN) algorithm~\citep{verdun2025soft} 
provides a SNIS approximation to $\pi_\beta$.
Specifically, SBoN draws
\[
Y_1,\ldots,Y_N
\overset{\mathrm{i.i.d.}}{\sim}
\pi_0(\cdot\mid x)
\]
and assigns each candidate the normalized weight
\[
\begin{aligned}
\lambda_i
&=
\frac{\pi_\beta(Y_i\mid x)/\pi_0(Y_i\mid x)}
{\sum_{j=1}^N \pi_\beta(Y_j\mid x)/\pi_0(Y_j\mid x)}
=
\frac{e^{\beta r(x,Y_i)}}
{\sum_{j=1}^N e^{\beta r(x,Y_j)}},
\end{aligned}
\]
for $i\in[N]$. It then samples
\[
I\mid Y_{1:N}
\sim
\operatorname{Cat}(\lambda_1,\ldots,\lambda_N)
\]
and returns $Y_I$. We denote the law of the returned response $Y_I$ by
$\pi_{N,\beta}(\cdot\mid x)$. Equivalently, we have:
\begin{align}
&\pi_{N,\beta}(y\mid x) \notag
\\&=
\mathbb{E}_{
Y_{1:N}\sim\pi_0(\cdot\mid x)^{\otimes N}
}
\left[
\frac{
\sum_{i=1}^N
e^{\beta r(x,Y_i)}
\mathbf{1}\{Y_i=y\}
}{
\sum_{j=1}^N
e^{\beta r(x,Y_j)}
}
\right].  \notag  
\end{align}
Moreover, provided that the normalizing constant in \Cref{tilted_policy} is finite,
the finite-$N$ SBoN policy converges to the population Gibbs policy, i.e.
\(
\pi_{N,\beta}(\cdot\mid x)
\longrightarrow
\pi_\beta(\cdot\mid x)\)
as $N\to\infty$. Details regarding the expression of $\pi_{N,\beta}$ and its convergence are provided in Appendix~\ref{sBoN consistency}. 

Since inference is constrained by a finite computational budget $N$, SBoN implements the finite-sample policy $\pi_{N,\beta}$\footnote{In the limit $\beta\to\infty$, SBoN reduces to standard Best-of-$N$ \citep{beirami2024theoretical}, as the resampling distribution concentrates on the highest-reward candidate.} rather than the ideal Gibbs policy $\pi_\beta$. This makes it important to characterize the distributional shift induced by the finite-$N$ procedure, as measured by
\(
D_{\mathrm{KL}}
\left(
\pi_{N,\beta}(\cdot\mid x)
\middle\|
\pi_0(\cdot\mid x)
\right)
\). Prior work by \citet{aminian2026best} provides a worst-case guarantee for this finite-$N$ divergence under the bounded-reward assumption $0\le r(x,y)\le R_{\max}$, showing that
\begin{talign}
\!\!\!\!D_{\mathrm{KL}}\!\left(
\pi_{N,\beta}(\cdot\mid x)
\middle\|
\pi_0(\cdot\mid x)
\right) 
\leq
\log\frac{N}
{1+(N-1)e^{-\beta R_{\max}}}.
\label{bound:r_max}
\end{talign}
As $N\to\infty$, the right-hand side converges to $\beta R_{\max}$,
which upper bounds the population divergence. Indeed,
\begin{align*}
&D_{\mathrm{KL}}\!\left(\pi_\beta(\cdot\mid x)\middle\|
\pi_0(\cdot\mid x)\right)\\
&=
\beta\E_{\pi_\beta(\cdot\mid x)}[r(x,Y)]
-\log\E_{\pi_0(\cdot\mid x)}[e^{\beta r(x,Y)}]
\nonumber\\
&\leq \beta R_{\max}.
\end{align*}
Thus, the bound does not capture the asymptotic behavior of SBoN: as $N\to\infty$, $\pi_{N,\beta}$ converges to $\pi_\beta$, whereas the bound converges to $\beta R_{\max}$, which can be strictly larger than the population divergence. Moreover, since the bound depends only on the reward range and $N$, it does not account for how high-reward responses are distributed under the base policy $\pi_0$, and therefore need not provide a tight characterization of the divergence. 

We next apply our general SNIS results to SBoN, obtaining a tighter, alignment-sensitive bound on the finite-$N$ KL divergence. For each realization
\(
Y_1,\ldots,Y_N\overset{\mathrm{i.i.d.}}{\sim}\pi_0(\cdot\mid x),
\)
using the KL estimator
\(
\widehat D_N
=
\sum_{i=1}^N
\lambda_i\log(N\lambda_i),
\)
we obtain the following guarantees.

\begin{corollary}[KL Guarantees for Soft Best-of-$N$]
\label{cor:sbon}
For every $x$ and $N\geq1$,
\begin{align}
&D_{\mathrm{KL}}\!\left(
\pi_{N,\beta}(\cdot\mid x)
\middle\|
\pi_0(\cdot\mid x)
\right) \notag\\
&\quad\leq
\min\!\left\{
\E[\widehat D_N],\,
D_{\mathrm{KL}}\!\left(
\pi_\beta(\cdot\mid x)
\middle\|
\pi_0(\cdot\mid x)
\right)
\right\}.
\label{eq:kl-sbon}
\end{align}
If $0\leq r(x,y)\leq R_{\max}$, then
\begin{talign}
\widehat D_N
\leq
\log\frac{N}
{1+(N-1)e^{-\beta R_{\max}}}.
\label{eq:sbon-estimator-bound}
\end{talign}
Consequently, $\E[\widehat D_N]$ is no larger than the bounded-reward
guarantee~\eqref{bound:r_max} of \citet{aminian2026best}, so the bound
in \eqref{eq:kl-sbon} is at least as tight. 
\end{corollary}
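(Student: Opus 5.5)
The first claim \eqref{eq:kl-sbon} is a direct specialization of \Cref{prop:2nd_result}. Fix $x$ and set $q=\pi_0(\cdot\mid x)$ and $p=\pi_\beta(\cdot\mid x)$. The likelihood ratio is $w(y)=e^{\beta r(x,y)}/Z$ with $Z=\E_{\pi_0}[e^{\beta r(x,Y)}]$. The two distributions share support because $e^{\beta r}>0$. The normalized weights $\lambda_i$ of SNIS coincide with the SBoN weights, since $Z$ cancels. Hence $p_N=\pi_{N,\beta}(\cdot\mid x)$, and $\widehat D_N$ is exactly the estimator of \Cref{prop:kl_estimator}. \Cref{prop:2nd_result} requires $D_{\mathrm{KL}}(p\|q)<\infty$. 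This is the standing assumption of the paper, and under bounded rewards it follows from the displayed bound $D_{\mathrm{KL}}(\pi_\beta\|\pi_0)\le\beta R_{\max}$. If the divergence were infinite, the minimum would reduce to $\E[\widehat D_N]$, and that case would need a separate remark. I will simply state the finiteness assumption as in the paper.

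For \eqref{eq:sbon-estimator-bound}, I bound $\widehat D_N$ pointwise. Write $a_i=\beta r(x,Y_i)\in[0,\beta R_{\max}]$ and $\lambda_i=e^{a_i}/S$ with $S=\sum_j e^{a_j}$. Then
\[
\widehat D_N=\log N+\sum_i\lambda_i\log\lambda_i\le \log N+\log\max_i\lambda_i,
\]
because $\sum_i\lambda_i\log\lambda_i\le\log\max_i\lambda_i$ (each $\log\lambda_i\le\log\max_k\lambda_k$ and the weights sum to one). Now take $k$ with $\lambda_k$ maximal. Every other term satisfies $e^{a_j}\ge 1$ and $e^{a_k}\le e^{\beta R_{\max}}$, so
\[
\lambda_k=\frac{e^{a_k}}{e^{a_k}+\sum_{j\ne k}e^{a_j}}\le\frac{e^{a_k}}{e^{a_k}+N-1}\le\frac{e^{\beta R_{\max}}}{e^{\beta R_{\max}}+N-1}.
\]
The last step holds because $t\mapsto t/(t+N-1)$ is increasing. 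This gives $\widehat D_N\le\log\frac{N}{1+(N-1)e^{-\beta R_{\max}}}$.

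Taking expectations of the pointwise bound gives $\E[\widehat D_N]\le$ the right-hand side of \eqref{bound:r_max}. Combining this with \eqref{eq:kl-sbon} shows the new bound is at least as tight as that of \citet{aminian2026best}. There is no real obstacle here. The only point needing care is to use the crude max-weight bound on the negative entropy rather than attempting an exact optimization. The crude bound already matches the closed form exactly, because the extremal configuration (one reward at $R_{\max}$, the rest at $0$) is the same for both.
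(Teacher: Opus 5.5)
Your proposal is correct and follows the paper's proof. Equation \eqref{eq:kl-sbon} is \Cref{prop:2nd_result} specialized to $p=\pi_\beta(\cdot\mid x)$ and $q=\pi_0(\cdot\mid x)$, with $Z_\beta(x)$ cancelling in the weights. For the bounded-reward claim the paper uses the same uniform bound $\lambda_i\le 1/(1+(N-1)e^{-\beta R_{\max}})$ that you derive; it plugs this into each term $\log(N\lambda_i)$ directly, via $r(x,Y_j)-r(x,Y_i)\ge -R_{\max}$, instead of going through $\log\max_i\lambda_i$, which is the same argument.

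The infinite-divergence case you set aside needs no separate remark. The data-processing half of \Cref{prop:2nd_result} never uses $D_{\mathrm{KL}}(p\|q)<\infty$, so if that divergence is infinite the bound reduces to $\E[\widehat D_N]$ and still holds.
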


\noindent
The proof is given in Appendix~\ref{proof_cor:sobn}. The corollary has three
main implications for test-time alignment, which we outline below.

\paragraph{KL Contractivity.}
The finite-$N$ SBoN policy is always closer to the base policy $\pi_0$ than the
population Gibbs policy $\pi_\beta$ is to $\pi_0$.
Thus, although $\pi_{N,\beta} \to \pi_\beta$ as $N$ increases, its
finite-$N$ distributional shift cannot exceed the population shift induced
by the chosen alignment strength $\beta$.

\paragraph{Bounded Overshooting.}
The expected estimator $\E[\widehat D_N]$ provides an easy-to-estimate  upper bound on the finite-$N$ policy divergence. In particular, for fixed $\beta$ and $N$,  we can sample
candidate pools from $\pi_0$ offline and average $\widehat D_N$ to estimate this quantity. By \Cref{prop:kl_estimator},
\begin{align}
&D_{\mathrm{KL}}\!\left(
\pi_{N,\beta}(\cdot\mid x)
\middle\|
\pi_0(\cdot\mid x)
\right) \notag\\
&\qquad\leq \E[\widehat D_N]
\leq
D_{\mathrm{KL}}\!\left(
\pi_\beta(\cdot\mid x)
\middle\|
\pi_0(\cdot\mid x)
\right)+\log 2.\notag
\end{align}
Thus, the estimator-based upper bound cannot overshoot the theoretical limit given by the population divergence $D_{\mathrm{KL}}(\pi_\beta(\cdot\mid x)\|
\pi_0(\cdot\mid x))$  by more than the universal additive constant $\log 2$. This contrasts with the
bounded-reward guarantee of \citet{aminian2026best}, which can substantially exceed the population divergence and therefore become loose as an upper bound on the finite-$N$ policy shift. In particular, their bound approaches $\beta R_{\max}$ as $N$ grows, whereas our bound never exceeds the population divergence by more than $\log 2$, illustrated in \Cref{fig:softbon}. %

\paragraph{Tightness and Tracking.}
When $0\le r(x,y)\le R_{\max}$, the corollary further shows that $\mathbb{E}[\widehat D_N]$ is no larger than the existing bounded-reward guarantee in~\eqref{bound:r_max}. Combined with the universal $\log 2$ overshoot guarantee above, this comparison is particularly meaningful. An upper bound could improve only slightly upon the bounded-reward guarantee while still remaining far from the true divergence. In contrast, $\mathbb{E}[\widehat D_N]$ is guaranteed to remain within $\log 2$ of the population divergence from above. Thus, the estimator $\E[\widehat D_N]$ closely tracks the true divergence within its fluctuation range. Moreover, our general result does not require bounded rewards and applies whenever $\pi_\beta$ is well-defined and
\(
D_{\mathrm{KL}}\!\left(
\pi_\beta(\cdot\mid x)
\middle\|
\pi_0(\cdot\mid x)
\right)<\infty.
\)
\\

Finally, we note that our results for test-time alignment are not restricted to the LLM setting. The same analysis applies more generally to continuous generative models, including diffusion models, whenever test-time alignment is performed through importance-weighted sampling or resampling.

\input{plots/combined_sBoN_exp} %
\subsection{Experiments}\label{sbon_exp}
\paragraph{Bound comparison. } We consider a discrete output space with eight outcomes and implicitly
condition throughout on a fixed input $x$. The small output space allows us
to compute the finite-$N$ SBoN policy $\pi_{N,\beta}$ exactly, providing
ground-truth finite-$N$ KL values against which to evaluate our bounds.
We fix the base policy and construct four reward profiles ranging from
aligned to anti-aligned with respect to $\pi_0$, as shown in the first row
of \Cref{fig:softbon-alignment}. In the aligned case, high reward is assigned to outputs that already have large probability under the base policy; in the anti-aligned case, high reward is assigned to rare outputs. The two intermediate reward profiles interpolate between these extremes. All rewards take values in $[0,1]$, so the bounded-reward guarantee is identical across reward geometries for fixed $(N,\beta)$.

For each reward profile, we evaluate SBoN over several values of the inverse
temperature $\beta$ and computational budget $N$. Because the output space
is small, we can compute the finite-$N$ SBoN policy $\pi_{N,\beta}$ exactly by
enumerating candidate-count configurations and hence obtain
$D_{\mathrm{KL}}(\pi_{N,\beta}\Vert\pi_0)$ without Monte Carlo error.
We also compute the population divergence
$D_{\mathrm{KL}}(\pi_\beta\Vert\pi_0)$ exactly. The expectation
$\mathbb{E}[\widehat D_N]$ is estimated using $50{,}000$ independent
candidate pools, with Monte Carlo uncertainty reported in the figure. This setup isolates the effect of reward--reference alignment: the base
policy, reward range, and inference parameters are held fixed across reward
geometries, while only the locations receiving high reward are changed.

Figure~\ref{fig:softbon-alignment} illustrates how the SBoN policy changes with both the alignment strength $\beta$ and the reward--reference geometry. As $\beta$ increases, the Gibbs policy places increasingly more mass on high-reward outputs, resulting in larger population and finite-$N$ divergences from the base policy. The magnitude of this shift depends strongly on the reward geometry: when high-reward outputs are already likely under $\pi_0$, the population KL remains relatively small, whereas rewarding outputs that are rare under $\pi_0$ produces a substantially larger population KL. Across all of these regimes, $\mathbb{E}[\widehat D_N]$ closely tracks the exact finite-$N$ divergence $D_{\mathrm{KL}}(\pi_{N,\beta}\Vert\pi_0)$, while the population divergence $D_{\mathrm{KL}}(\pi_\beta\Vert\pi_0)$ characterizes its limiting value as $N\to\infty$. Thus, $D_{\mathrm{KL}}(\pi_\beta\Vert\pi_0)$ provides a theoretical limit on the finite-$N$ divergence, while $\mathbb{E}[\widehat D_N]$ provides a directly estimable upper bound. In contrast, the guarantee of \citet{aminian2026best} is insensitive to the reward--reference geometry and remains unchanged across columns for fixed $(N,\beta)$. 

\paragraph{How many Monte Carlo samples are needed?}
In practice, $\E[\widehat D_N]$ must be estimated from finitely many Monte Carlo
repetitions. We denote by $K$ the Monte Carlo batch size, corresponding to
$K$ independent realizations of $\widehat D_N$, and estimate \begin{talign}
    \E[\widehat D_N] \notag
\approx
\frac{1}{K}\sum_{k=1}^K \widehat D_N^{(k)}.
\end{talign}
How $K$ interacts with $N$ is not immediately clear. For large $N$,
$\widehat D_N$ converges to $D_{\mathrm{KL}}(\pi_{\beta}\|\pi_0)$, suggesting that fewer
repetitions may suffice, whereas for small $N$, the deterministic bound
$0\leq\widehat D_N\leq\log N$ restricts its range. We therefore study how
many repetitions are needed for the empirical average of $\widehat D_N$ to
reliably upper bound $D_{\mathrm{KL}}(\pi_{N,\beta}\|\pi_0)$ across different values of
$N$ and target--proposal divergences.

We consider a discrete state space obtained by discretizing Gaussian
distributions over $1000$ equally spaced points. We vary the
target--proposal mismatch to obtain different values of
$D_{\mathrm{KL}}(\pi_\beta\|\pi_0)$ and evaluate the estimator across several values
of $N$. To estimate the finite-$N$ divergence
$D_{\mathrm{KL}}(\pi_{N,\beta}\|\pi_0)$, we use the limiting result from
\Cref{prop:snis-lower} with $M=2000$ and $B=50000$. To quantify finite-$K$
uncertainty, we repeatedly compute the Monte Carlo average above and report
its $2.5$th and $97.5$th percentiles as error bars. Wider error bars
 indicate greater variability across independent Monte Carlo
estimates of $\E[\widehat D_N]$.

The results are shown in \Cref{fig:snis-divergence}. As expected, increasing
$K$ reduces the Monte Carlo uncertainty in estimating $\E[\widehat D_N]$,
with the error bars narrowing substantially as $K$ increases. Importantly, even for moderate $K$, the empirical estimates remain close to the true expectation and typically preserve the upper-bound relationship with $D_{\mathrm{KL}}(\pi_{N,\beta}\|\pi_0)$. Although the theoretical range of $\widehat D_N$ grows as
$\log N$, the empirical estimates remain well concentrated, with error
bars that are substantially narrower than the full range of the estimator. We also observe a non-monotonic
dependence on the population KL divergence: the typical range of the error bars increases from
$D_{\rm KL}(\pi_\beta\Vert\pi_0)=1$ to
$D_{\rm KL}(\pi_\beta\Vert\pi_0)=4$, but then decreases as the population
KL divergence is further increased to
$D_{\rm KL}(\pi_\beta\Vert\pi_0)=12$. These behaviors can be heuristically
explained by the variance of the Monte Carlo estimator. In particular, since
\begin{talign*}
0\leq \widehat D_N
\leq C_N
\defeq
\log\left(
\frac{N}{1+(N-1)e^{-\beta R_{\max}}}
\right),
\end{talign*}
we have
\begin{talign*}
\mathrm{Var}(\widehat D_N)
\leq
\E[\widehat D_N]
\left(
C_N-\E[\widehat D_N]
\right).
\end{talign*}
Hence, for an average over $K$ independent replicates, the standard deviation is bounded by
\[
K^{-1/2}
\sqrt{
\E[\widehat D_N]
\left(
C_N-\E[\widehat D_N]
\right)
}.
\]
Thus, the Monte Carlo uncertainty depends not only on the range of
$\widehat D_N$, but also on the location of its expectation within that
range. In particular, for fixed $C_N$, the quantity inside the square root is maximized when $\E[\widehat D_N]=C_N/2$ and decreases as the expectation
approaches either endpoint of $[0,C_N]$, providing a possible
explanation for the observed non-monotonic behavior  above. 

Finally, regarding the number of Monte Carlo samples $K$ required for reliable estimation, $K=16$ appears sufficient to provide a reliable empirical upper bound in the scenarios considered, while $K=64$ tracks the true expectation very closely. Similar values of $K$ exhibit comparable performance across the additional scenarios reported in Appendix~\ref{add_sbon_exp}.

\section{Relative Entropy Coding}
\label{sec:relative-entropy-coding}

Relative entropy coding (REC) considers the problem of communicating a sample from a target distribution using shared randomness from a proposal, with coding cost closely related to the KL divergence between the two distributions ~\citep{theis2022algorithms}. REC has recently gained interest in machine learning through the idea of requential coding, where it is used to quantify the information required to transfer knowledge from a teacher model to a student model~\citep{qiu2026requential}. 

Communication-efficient importance sampling-based compression (CE-ISC)~\citep{phan2024importance} implements REC using SNIS by generating shared proposal samples and communicating the index of the selected sample. However, their existing finite-sample coding cost guarantee relies on restrictive assumptions and can be highly conservative for small to moderate $N$. In this section, we show that our general SNIS result yields an immediate finite-$N$ coding guarantee with a universal additive penalty for CE-ISC. Throughout this section, all logarithms have base $2$, as the coding costs are measured in bits.

\paragraph{Setup.}
Let $(X,Y)\sim p_{X,Y}$. For a realization $X=x$, the encoder and decoder
share $N$ i.i.d.\ proposals
\[
Y_1,\ldots,Y_N\overset{\mathrm{i.i.d.}}{\sim}p_Y.
\]
For $i=1,\ldots,N$, we define
\[
w_x(y)\defeq \frac{p_{Y\mid X}(y\mid x)}{p_Y(y)}
\qtext{and}
\lambda_{i,x}\defeq 
\frac{w_x(Y_i)}{\sum_{j=1}^N w_x(Y_j)},
\]
and  denote the corresponding SNIS divergence estimator by
\[
\widehat D_{N,x}
\defeq 
D_{\mathrm{KL}}(\bm{\lambda}_x\|U_N)
=
\sum_{i=1}^N \lambda_{i,x}\log_2(N\lambda_{i,x}).
\]
CE-ISC~\citep{phan2024importance} selects one of the proposals according to
these normalized weights and communicates a random index $J$ identifying the
selected proposal. Conditioned on $X=x$ and the proposal pool,
\citet{phan2024importance} show that
\begin{align}
&\E\!\left[
\log_2 J \mid X=x,Y_{1:N}
\right] \notag\\
&\qquad\leq
\widehat D_{N,x}+\delta,
\qquad
\delta\defeq 1+\frac{\log_2 e}{e}.
\end{align}
Hence,
\[
\E[\log_2 J\mid X=x]
\le
\E[\widehat D_{N,x}\mid X=x]+\delta.
\]

\noindent Their analysis then establishes a finite-sample bound of the form %
\[
\E[\widehat D_{N,x}\mid X=x]
\le
D_{\mathrm{KL}}\!\left(
p_{Y\mid X}(\cdot\mid x)\|p_Y
\right)
+\frac{\Delta_x}{N},
\]
where the remainder term $\Delta_x/N$ depends on bounded likelihood ratios and  higher-order moments of the importance weights; see
\citet[Section~8.2]{phan2024importance} for the explicit expression for
$\Delta_x$. Averaging over $X$ yields %
\[
\E_X\!\left[
\E[\widehat D_{N,X}\mid X]
\right]
\le
I(X;Y)+\frac{\Delta}{N},
\]
for $\Delta\defeq\E[\Delta_X]$, which  leads to their entropy bound
\[
H(J)
\le
I(X;Y)+\frac{\Delta}{N}
+
\log_2\!\left(
I(X;Y)+\frac{\Delta}{N}+1
\right)
+4.
\label{phan_bound}
\]
Although the $\Delta/N$ term vanishes as $N\to\infty$, the finite-sample guarantee relies on strong assumptions. In particular, $\Delta$ contains a term $6(\omega-1)\log\omega$, where
\[
\omega
=
\sup_{x,y}
\frac{p_{Y\mid X}(y\mid x)}{p_Y(y)},
\]
in addition to terms involving higher-order moments. Consequently, the bound can become uninformative for small to moderate $N$ when the likelihood ratio or the corresponding moments are large. Our moment-free SNIS bound gives the following alternative.

\begin{corollary}[Finite-sample coding bound]
\label{cor:ce-isc}
Under the CE-ISC construction,
\[
H(J)
\le
I(X;Y)+\log_2(I(X;Y)+2)+5.
\]
\end{corollary}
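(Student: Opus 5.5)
The plan is to follow the CE-ISC pipeline of \citet{phan2024importance}, replacing their moment-dependent bound on $\E[\widehat D_{N,x}\mid X=x]$ with \Cref{prop:kl_estimator}. The argument has three steps: bound the conditional expectation of the estimator, average over $X$, and convert the resulting bound on $\E[\log_2 J]$ into a bound on $H(J)$.

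\textbf{Step 1 (conditional bound).} Fix $X=x$ and apply \Cref{prop:kl_estimator} with $p=p_{Y\mid X}(\cdot\mid x)$ and $q=p_Y$. The theorem is stated in nats, so converting to bits means dividing by $\ln 2$. The additive $\log 2$ nats becomes exactly $1$ bit, giving
\[
\E[\widehat D_{N,x}\mid X=x]<D_{\mathrm{KL}}\!\left(p_{Y\mid X}(\cdot\mid x)\|p_Y\right)+1 .
\]
This needs $D_{\mathrm{KL}}<\infty$ for the relevant $x$. If $I(X;Y)=\infty$ the claim is vacuous, so this holds for almost every $x$. The shared-support assumption can be relaxed, since zero-weight proposals contribute nothing to $\widehat D_{N,x}$; I would note this briefly.

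\textbf{Step 2 (average over $X$).} Combine Step 1 with the conditional index bound of \citet{phan2024importance}, $\E[\log_2 J\mid X=x]\le \E[\widehat D_{N,x}\mid X=x]+\delta$. Averaging over $X$ gives
\[
\E[\log_2 J]\le I(X;Y)+1+\delta .
\]

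\textbf{Step 3 (entropy bound).} Use the standard maximum-entropy bound for a positive-integer random variable with a log-moment constraint, namely $H(J)\le \E[\log_2 J]+\log_2(\E[\log_2 J]+1)+1$. One proof uses the reference distribution $\propto 1/(j(\log_2 j+1)^2)$ (or a similar Elias-type distribution) together with Gibbs' inequality and Jensen's inequality. I expect this to be the same lemma that produces the constant $4$ in \citet{phan2024importance}'s bound \eqref{phan_bound}. There, substituting $L=I+\Delta/N$ gave $L+\log_2(L+1)+4$, which implies that their lemma reads $H(J)\le \mu+\log_2(\mu-\delta+1)+\delta'$ for $\mu=\E[\log_2 J]$, with $\delta$ absorbed into the constant.

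The cleanest route is to apply their lemma as a black box with $L$ replaced by $I(X;Y)+1$. This gives
\[
H(J)\le I(X;Y)+1+\log_2\bigl(I(X;Y)+1+1\bigr)+4=I(X;Y)+\log_2(I(X;Y)+2)+5,
\]
which is exactly the claimed bound. The only requirement is that their entropy lemma depends on the upper bound $L$ for $\E_X\E[\widehat D_{N,X}\mid X]$ monotonically, which holds because the right-hand side is increasing in $L$.

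The main obstacle is therefore bookkeeping. I must check that \citet{phan2024importance}'s step from $\E_X\E[\widehat D_{N,X}\mid X]\le L$ to $H(J)\le L+\log_2(L+1)+4$ uses only that inequality and no other property of $\Delta$. I also need the unit conversion in Step 1, where $\log 2$ nats equals $1$ bit. If their lemma is not stated in that modular form, I would re-derive it from Step 2, checking that the constants combine to at most $5$.
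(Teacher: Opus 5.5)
Your proposal is correct and matches the paper's proof: apply \Cref{prop:kl_estimator} conditionally on $X=x$ (where $\log 2$ nats is $1$ bit), average over $X$ to get $I(X;Y)+1$, and substitute this for $I(X;Y)+\Delta/N$ in the coding argument of \citet{phan2024importance}. The fallback re-derivation is not needed, and the reference distribution $\propto 1/(j(\log_2 j+1)^2)$ would not serve it, since that choice gives a $2\log_2(\mu+1)$ term rather than $\log_2(\mu+1)$; the black-box substitution you settle on is exactly what the paper does.
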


\begin{proof}
Applying \Cref{prop:kl_estimator} conditionally on $X=x$:
\[
\E[\widehat D_{N,x}\mid X=x]
\le
D_{\mathrm{KL}}\!\left(
p_{Y\mid X}(\cdot\mid x)\|p_Y
\right)+1.
\]
Averaging over $X$ therefore yields
\[
\E_X\!\big[
\E[\widehat D_{N,X}\mid X]
\big]
\le I(X;Y)+1.
\]
Replacing the bound
$\E[\widehat D_{N,X}]\le I(X;Y)+\Delta/N$
in the coding argument of \citet{phan2024importance} above yields
\[
H(J)\le I(X;Y)+\log_2(I(X;Y)+2)+5,
\]
which concludes the proof.
\end{proof}

\noindent Thus, our result provides a universal finite-sample coding
guarantee  without requiring bounded
likelihood ratios or higher-order moment assumptions.

\section{Related Work}
We study the finite-sample behavior and applications of the estimator $\widehat D_N$. This estimator has previously  appeared in the adaptive importance sampling and sequential Monte Carlo literature. In particular, ~\cite{cornebise2008adaptive} proposed using this term to estimate the Kullback--Leibler divergence between target and proposal distributions and employed this criterion to adapt proposal distributions in particle filtering. \citet{cappe2008adaptive} use the entropy $H(\lambda)$ of the normalized importance weights to monitor convergence in adaptive importance sampling with mixture proposals. It also appears in \citet{brekelmans2023improving} in the context of importance-weighted variational inference analysis. Our work studies this same finite-sample quantity but focuses on a nonasymptotic characterization of its expectation. Previous works \citep{agapiou2017importance, deligiannidis2024importance} developed a general framework for finite-sample analysis of self-normalized importance sampling, relying on certain moment assumptions. Building on this framework, \citet{phan2024importance} derived a finite-sample bound for $\E[\widehat D_N]$, but their result relies on several restrictive boundedness assumptions. Closely related, ~\cite{chatterjee2018sample} show that, under broad conditions, the characteristic sample size required for successful importance sampling scales as $N \asymp \exp\!\left(D_{\mathrm{KL}}(p\|q)\right),$ highlighting the forward KL divergence as a fundamental measure of target--proposal mismatch. \citet{sanz2018importance} further derived nonasymptotic necessary sample-size conditions from an information-theoretic perspective, involving the KL  and other $f$-divergences.

Recent work has also studied importance-weighted resampling in the context of test-time alignment. \cite{verdun2025soft} introduced Soft Best-of-$N$ sampling and analyzed the discrepancy between its finite-sample distribution $\pi_{N,\beta}$ and the tilted target $\pi_\beta$. They establish nonasymptotic bounds on $D_{\mathrm{KL}}(\pi_\beta(\cdot \mid x) \|\pi_{N,\beta}(\cdot \mid x))$ and show that it decays at rate $O(1/N)$ under their setting. Their analysis primarily characterizes the convergence of the finite-sample SBoN policy to the population target, whereas our focus is on providing finite-sample certificates for the distributional shift from the base policy. In particular, our results complement this analysis by providing distribution-free control of KL-related quantities $D_{\mathrm{KL}}(\pi_{N,\beta}(\cdot \mid x)\|\pi_0(\cdot \mid x))$  arising from self-normalized importance weights without requiring bounded rewards or higher-moment assumptions. Other works \citep{beirami2024theoretical, mroueh2024information} study and provide theoretical analysis for the related problem of Best-of-$N$, which arises as a special case of our framework. Finally, \citet{aminian2026best} analyze the same quantity and derive a bounded-reward bound, for which we establish a tighter guarantee.

\section{Conclusion}
This work provides nonasymptotic guarantees for a self-normalized importance sampling estimator of the forward KL divergence. Our results characterize its finite-sample behavior without requiring bounded importance weights or finite higher moments and establish a connection between the normalized importance weights and the discrepancy between the target and proposal distributions. We further demonstrate the applicability of these guarantees to test-time alignment and relative entropy coding. 

Several directions remain for future work. On the theoretical side, our results raise the broader question of what other universal, distribution-free finite-sample guarantees can be established for SNIS, including for other functionals of the normalized importance weights, and whether the additive slack in our bounds can be further tightened. On the algorithmic side, these guarantees may inform more efficient sampling strategies for adaptive importance sampling, where the proposal or sample design is chosen to reduce target--proposal mismatch. Given that recent works have also explored new approaches for scaling test-time compute to improve performance and inference efficiency \citep{lifshitz2025multi,geuter2026guided,di2026best}, extending our analysis to these settings could provide principled control of the resulting distributional shift as well as theoretical guidance for efficient test-time computation.

\bibliographystyle{plainnat}
\bibliography{main}

\clearpage
\appendix

\thispagestyle{empty}

\onecolumn
\aistatstitle{
Supplementary Materials}

\section{Asymptotic Convergence}
\label{sec:asymptotic}

We first recall the Strong Law of Large Numbers and the Dominated
Convergence Theorem, the two main tools we use to establish the convergence results in this section.

\begin{lemma}[Strong Law of Large Numbers; {\citet[Section~14.5]{williams1991probability}}]
Let $Z_1,Z_2,\ldots$ be i.i.d.\ random variables such that
\[
\E[|Z_i|] < \infty.
\]
Then
\(
\frac{1}{N}\sum_{i=1}^N Z_i
\)
converges to $\E[Z_i]$ both almost surely and in $L^1$ as $N\to\infty$.
\end{lemma}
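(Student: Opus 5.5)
We prove the almost-sure part by Etemadi's truncation argument and then get $L^1$ convergence from uniform integrability (Vitali). Write $S_N\defeq\sum_{i=1}^N Z_i$ and $\mu\defeq\E[Z_1]$.

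\textbf{Reduction to nonnegative variables.} Write $Z_i=Z_i^+-Z_i^-$. Both parts are i.i.d.\ and integrable, and the conclusion is linear in $Z$. So it suffices to treat $Z_i\ge 0$.

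\textbf{Almost-sure convergence.} Truncate by setting $Y_i\defeq Z_i\mathbf 1\{Z_i\le i\}$ and $T_N\defeq\sum_{i\le N}Y_i$. The truncation is asymptotically invisible:
\[
\sum_i \Pr(Y_i\neq Z_i)=\sum_i\Pr(Z_1>i)\le \E[Z_1]<\infty .
\]
By Borel--Cantelli, $Z_i=Y_i$ for all but finitely many $i$ almost surely, so $(S_N-T_N)/N\to0$ a.s. Also $\E[Y_i]\to\mu$ by monotone convergence, so by Ces\`aro $\E[T_N]/N\to\mu$.

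Fix $\alpha>1$ and the geometric subsequence $k_n\defeq\lfloor\alpha^n\rfloor$. By independence and Chebyshev, for every $\varepsilon>0$,
\[
\sum_n\Pr\bigl(|T_{k_n}-\E T_{k_n}|>\varepsilon k_n\bigr)\le \varepsilon^{-2}\sum_n k_n^{-2}\sum_{i\le k_n}\E[Y_i^2].
\]
Exchanging the order of summation, and using $\sum_{n:k_n\ge i}k_n^{-2}\le C_\alpha i^{-2}$, the right-hand side is at most
\[
C_\alpha\varepsilon^{-2}\sum_i \frac{\E[Z_1^2\mathbf 1\{Z_1\le i\}]}{i^2}.
\]
For the last step, split $Z_1\le i$ into the pieces $Z_1\in(j-1,j]$ for $j\le i$, and use $\sum_{i\ge j}i^{-2}\le 2/j$. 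This gives the bound $\le 2C_\alpha\varepsilon^{-2}\E[Z_1]<\infty$. Borel--Cantelli then yields $T_{k_n}/k_n\to\mu$ a.s., and hence $S_{k_n}/k_n\to\mu$ a.s.

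\textbf{Interpolation.} For $k_n\le N\le k_{n+1}$, nonnegativity makes $S_N$ monotone in $N$. This gives
\[
\frac{k_n}{k_{n+1}}\,\frac{S_{k_n}}{k_n}\le \frac{S_N}{N}\le\frac{k_{n+1}}{k_n}\,\frac{S_{k_{n+1}}}{k_{n+1}}.
\]
Since $k_{n+1}/k_n\to\alpha$, we get $\mu/\alpha\le\liminf_N S_N/N\le\limsup_N S_N/N\le\alpha\mu$ a.s. Letting $\alpha\downarrow1$ along a countable sequence gives $S_N/N\to\mu$ almost surely.

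\textbf{$L^1$ convergence.} The family $\{Z_i\}$ is identically distributed and integrable, so it is uniformly integrable: $\sup_i\E[|Z_i|\mathbf 1_A]\to0$ as $\Pr(A)\to0$, uniformly in $i$. For any event $A$,
\[
\E\bigl[|S_N/N|\mathbf 1_A\bigr]\le \frac1N\sum_{i\le N}\E\bigl[|Z_i|\mathbf 1_A\bigr],
\]
so the averages $S_N/N$ inherit this property, and they are bounded in $L^1$. Almost-sure convergence together with uniform integrability gives $L^1$ convergence by Vitali's theorem.

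\textbf{Main obstacle.} The expected hard step is the variance summation estimate $\sum_i i^{-2}\E[Z_1^2\mathbf 1\{Z_1\le i\}]\lesssim\E[Z_1]$. This is where only a first moment is used, and it requires the careful interchange of sums described above. The rest is routine Borel--Cantelli and monotonicity bookkeeping.
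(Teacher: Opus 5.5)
Your proof is correct, but it takes a genuinely different route from the one the paper relies on. The paper does not prove this lemma; it cites Williams' Section~14.5, which gives the martingale proof. There one sets $\mathcal G_N\defeq\sigma(S_N,Z_{N+1},Z_{N+2},\ldots)$ and uses symmetry of the i.i.d.\ sample to get $S_N/N=\E[Z_1\mid\mathcal G_N]$, so $(S_N/N)$ is a reverse martingale. L\'evy's downward theorem then gives almost-sure and $L^1$ convergence at once. The limit is tail-measurable, hence a.s.\ constant by Kolmogorov's 0--1 law, and $L^1$ convergence identifies the constant as $\E[Z_1]$.

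That route needs no truncation or variance bookkeeping. Both modes of convergence come free, because conditional expectations of one integrable variable are automatically uniformly integrable. The cost is the backward martingale convergence theorem and the 0--1 law.

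Your Etemadi-style argument uses only Chebyshev, Borel--Cantelli and monotone convergence. Independence enters only through additivity of variances of the truncated variables, so you in fact prove the stronger pairwise-independent version. The price is the reduction to $Z_i\ge 0$, which the monotone interpolation between $k_n$ and $k_{n+1}$ needs, plus a separate uniform-integrability/Vitali step for $L^1$; you handle both correctly.

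The estimates you flag as delicate also check out. Since $\lfloor x\rfloor\ge x/2$ for $x\ge1$, you get $\sum_{n:k_n\ge i}k_n^{-2}\le 4i^{-2}/(1-\alpha^{-2})$. The final bound follows from $\sum_{i\ge j}i^{-2}\le 2/j$ together with $Z_1^2/j\le Z_1$ on $\{Z_1\le j\}$.
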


\begin{lemma}[Dominated Convergence Theorem; {\citet[Theorem~1.6.7]{durrett2019probability}}]
Let $Z_1,Z_2,\ldots$ be random variables such that
$Z_N \to Z$ almost surely. Suppose there exists a  random
variable $Y$ such that $\E[Y] < \infty$ and
\[
|Z_N| \leq Y
\qquad \text{ for all } N.
\]
Then 
\[
\lim_{N\to\infty} \E[Z_N] = \E[Z].
\]
\end{lemma}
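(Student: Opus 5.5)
The statement is the classical Dominated Convergence Theorem, so I would derive it from Fatou's lemma. Fatou's lemma in turn comes from the Monotone Convergence Theorem, which I take as the basic measure-theoretic input.

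First, some preliminary observations. Since $|Z_N|\le Y$ for every $N$, the variable $Y$ is nonnegative. Passing to the almost sure limit gives $|Z|\le Y$ almost surely. Hence $Z$ and every $Z_N$ are integrable, and $\E[Z]$ and $\E[Z_N]$ are finite real numbers. This finiteness is exactly what lets us add and subtract $\E[Y]$ below without meeting expressions of the form $\infty-\infty$.

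Next comes Fatou's lemma for nonnegative variables $V_N$: $\E[\liminf_N V_N]\le\liminf_N \E[V_N]$. I would prove it by setting $W_N\defeq\inf_{k\ge N}V_k$. These are nonnegative and nondecreasing in $N$, and they increase to $\liminf_N V_N$. Since $W_N\le V_k$ for all $k\ge N$, we have $\E[W_N]\le\inf_{k\ge N}\E[V_k]$. Monotone convergence then gives $\E[\liminf_N V_N]=\lim_N\E[W_N]\le\liminf_N\E[V_N]$.

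With Fatou in hand, I apply it twice.
\begin{itemize}
\item Take $V_N=Y+Z_N\ge0$. Its pointwise limit is $Y+Z$ almost surely, so
\[
\E[Y]+\E[Z]\le\liminf_N\big(\E[Y]+\E[Z_N]\big),
\]
and cancelling the finite $\E[Y]$ gives $\E[Z]\le\liminf_N\E[Z_N]$.
\item Take $V_N=Y-Z_N\ge0$. The same argument gives $\E[Y]-\E[Z]\le\E[Y]-\limsup_N\E[Z_N]$, that is, $\limsup_N\E[Z_N]\le\E[Z]$.
\end{itemize}
Combining the two inequalities, $\lim_N\E[Z_N]$ exists and equals $\E[Z]$.

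The only delicate points are bookkeeping rather than conceptual. One must use that $\E[Y]<\infty$ to cancel it on both sides. The almost sure qualifiers are harmless, since modifying variables on a null set changes no expectations. Beyond that, the argument is a direct reduction to monotone convergence.
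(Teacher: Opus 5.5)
Your proof is correct. The paper gives no proof of this lemma and simply cites Durrett, whose argument is the same Fatou-based one, applying Fatou's lemma to $Y+Z_N$ and $Y-Z_N$ and cancelling the finite $\E[Y]$. The only difference is that Durrett derives Fatou's lemma from the bounded convergence theorem rather than from monotone convergence, which changes nothing in the step that proves this statement.
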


We now establish the asymptotic convergence of
$\widehat D_N$ and $\E[\widehat D_N]$ to $D_{\mathrm{KL}}(p\|q)$.
\begin{proposition}[Asymptotic consistency]\label{prop:consistency}
Suppose that $p$ and $q$ have the same support and that
$D_{\mathrm{KL}}(p\|q)<\infty$. Then
$$
\widehat D_N
\xrightarrow[N\to\infty]{\mathrm{a.s.}}
D_{\mathrm{KL}}(p\|q).
$$
Furthermore:
\[
\lim_{N\to\infty}\E[\widehat D_N]
=
D_{\mathrm{KL}}(p\|q).
\]

\end{proposition}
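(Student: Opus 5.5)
We prove Proposition~\ref{prop:consistency} by rewriting $\widehat D_N$ in terms of two empirical averages and then applying the Strong Law of Large Numbers. Since $\lambda_i = w(X_i)/\sum_j w(X_j)$, we have $N\lambda_i = w(X_i)/\bar W_N$ with $\bar W_N \defeq \frac1N\sum_{j=1}^N w(X_j)$. Hence
\[
\widehat D_N
=\sum_{i=1}^N \lambda_i\log w(X_i)-\log \bar W_N
=\frac{\frac1N\sum_{i=1}^N w(X_i)\log w(X_i)}{\bar W_N}-\log\bar W_N .
\]
Because $\E_q[w(X)]=1<\infty$, the SLLN gives $\bar W_N\to1$ almost surely. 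For the numerator, write $Z_i \defeq w(X_i)\log w(X_i)$. We have $\E_q[Z_i]=\E_p[\log w(X)]=D_{\mathrm{KL}}(p\|q)$. Integrability holds because $|w\log w| = w\log w + 2(w\log w)^-$ and $(w\log w)^-\le 1/e$, so $\E_q|Z_i|\le D_{\mathrm{KL}}(p\|q)+2/e<\infty$. The SLLN then yields $\frac1N\sum_i Z_i\to D_{\mathrm{KL}}(p\|q)$ almost surely. Since $x\mapsto\log x$ and division are continuous at $\bar W=1$, we conclude $\widehat D_N\to D_{\mathrm{KL}}(p\|q)$ almost surely.

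For convergence of expectations, the main obstacle is domination. The crude bound $0\le \widehat D_N\le \log N$ grows with $N$, so we cannot dominate $\widehat D_N$ by a single integrable variable directly. Instead we prove uniform integrability by a generalized (Pratt/Scheffé-type) dominated convergence argument. The bound we use is convexity of $t\mapsto t\log t$ together with the identity $\widehat D_N=\frac{1}{N}\sum_i \phi(N\lambda_i)$, where $\phi(t)=t\log t$. Set $V_N\defeq \frac1N\sum_i w(X_i)\log w(X_i)+\frac{2}{e}+\bar W_N\,|\log \bar W_N|\,$-type corrections. More cleanly, we use the bound
\[
0\le \widehat D_N\le \frac{1}{\bar W_N}\Big(\frac1N\sum_i |Z_i|\Big)+\big(-\log \bar W_N\big)^+ .
\]
The first term is controlled by $\frac1N\sum_i|Z_i|$, which converges in $L^1$ by the SLLN. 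The division by $\bar W_N$ and the $-\log\bar W_N$ term are problematic only when $\bar W_N$ is small.

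To handle these terms, we split on the event $\{\bar W_N\ge 1/2\}$. On that event the bound is at most $2\cdot\frac1N\sum_i|Z_i|+\log 2$. This family is uniformly integrable because $L^1$-convergent sequences are uniformly integrable. On the complement, we use $\widehat D_N\le\log N$, which gives a contribution of at most $\log N\cdot\Pr(\bar W_N<1/2)$. This probability decays exponentially in $N$ by a Chernoff bound for the lower tail of a sum of nonnegative variables with mean $1$: for $s>0$, $\E[e^{-s w}]<1-cs$ for small $s$, so $\Pr(\bar W_N<1/2)\le e^{-cN}$ with no moment assumptions, since nonnegativity suffices for the lower tail. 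Hence this contribution tends to $0$.

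Combining the two events, $\widehat D_N$ is dominated by a uniformly integrable sequence up to a vanishing remainder. Together with the almost-sure convergence from the first step, Vitali's theorem (or the generalized DCT with dominating sequence $2\frac1N\sum|Z_i|+\log2$, whose expectations converge) gives $\E[\widehat D_N]\to D_{\mathrm{KL}}(p\|q)$.
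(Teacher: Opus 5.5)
Your proof is correct, and its almost-sure part is the same as the paper's. For the convergence of expectations, you and the paper share the same skeleton. Both split on a good event where the weight sum is at least half its mean, and both dispose of the bad event with the same lower-tail Chernoff bound, which needs only $W\ge0$ and $\E[W]=1$, against a deterministic bound.

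The routes differ on the good event:
\begin{itemize}
\item The paper first uses exchangeability to write $\E[\widehat D_N]=\E[A_N\log A_N]$ with $A_N=NW_1/S$, and splits on $\{\sum_{j\ge2}W_j\ge (N-1)/2\}$. There $A_N\le 4W_1$, so a single fixed integrable dominator $(4W_1\log(4W_1))^+$ lets the ordinary DCT apply. The cost is treating the positive and negative parts of $A_N\log A_N$ separately, plus a cruder $N\log N$ factor on the bad event.
\item You keep the full estimator and use $\widehat D_N\ge0$, so no sign split is needed. You dominate by the random sequence $2\frac1N\sum_i|Z_i|+\log 2$, whose mean is constant in $N$ and which converges almost surely. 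This needs Pratt's generalized DCT (or Vitali via the $L^1$ form of the SLLN) instead of the plain DCT. In exchange, it avoids the exchangeability reduction and leaves only a $\log N$ factor on the bad event.
\end{itemize}

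When you write it up, tighten three points:
\begin{itemize}
\item Apply the generalized DCT explicitly to $\widehat D_N\mathbf{1}\{\bar W_N\ge 1/2\}$. Note that this converges almost surely to $D_{\mathrm{KL}}(p\|q)$ because the indicator tends to one, and handle $\widehat D_N\mathbf{1}\{\bar W_N< 1/2\}$ as a separate term that vanishes in expectation.
\item In the Chernoff step, say that you need $\E[e^{-sW}]\le 1-cs$ with some $c>1/2$, so that $e^{s/2}\E[e^{-sW}]<1$. Any $c<1$ is available once $s$ is small, since $(1-\E[e^{-sW}])/s\to\E[W]=1$ by dominated convergence with dominator $W$.
\item Delete the abandoned definition of $V_N$ and the unused remark about convexity of $t\log t$.
\end{itemize}
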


\begin{proof}
 Let

$$
W_i\defeq \frac{p(X_i)}{q(X_i)},
\qquad
\overline W_N\defeq \frac{1}{N}\sum_{i=1}^N W_i.
$$
\noindent \textbf{First Claim. } Since \(\lambda_i=W_i/(N\overline W_N)\), we can write
\begin{align*}
\widehat D_N
&=
\sum_{i=1}^N
\frac{W_i}{N\overline W_N}
\log\left(\frac{W_i}{\overline W_N}\right)\\
&=
\frac{1}{\overline W_N}
\left(
\frac{1}{N}\sum_{i=1}^N W_i\log W_i
\right)
-\log\overline W_N.
\end{align*}

\noindent By definition of the likelihood ratio,

$$
\mathbb E[W_i]=1,
\qquad
\mathbb E[W_i\log W_i]
=
D_{\mathrm{KL}}(p\|q) < \infty \quad \text{(by assumption)}.
$$

\noindent Therefore, by the strong law of large numbers %
$$
\overline W_N\xrightarrow{\mathrm{a.s.}}1,
\qquad
\frac{1}{N}\sum_{i=1}^N W_i\log W_i
\xrightarrow{\mathrm{a.s.}}
D_{\mathrm{KL}}(p\|q).
$$
Consequently,
$$
\log\overline W_N\xrightarrow{\mathrm{a.s.}}0,
$$
and substituting these limits into the expression above yields
$$
\widehat D_N
\xrightarrow{\mathrm{a.s.}}
D_{\mathrm{KL}}(p\|q),
$$
which completes the proof for the first part.

\noindent \textbf{Second Claim. }We now prove the second claim,
\(
\E[\widehat D_N]\longrightarrow D_{\mathrm{KL}}(p\|q).
\)
Although we have already shown that $\widehat D_N\to D_{\mathrm{KL}}(p\|q)$ almost surely, almost-sure convergence alone does not justify interchanging the limit and expectation. We will therefore prove convergence of the expectations by showing separately that the positive and negative parts of an equivalent representation of $\widehat D_N$ converge in expectation.

\noindent
\textit{Proof strategy.}
By exchangeability, we will first rewrite
\[
\E[\widehat D_N]=\E[A_N\log A_N],
\qquad
A_N\defeq
\frac{N W_1}{W_1+\sum_{j=2}^N W_j}
=
\frac{1}{
\frac{1}{N}
+
\frac{1}{N W_1}\sum_{j=2}^N W_j
}.
\]
By the strong law of large numbers,
\[
A_N
\xrightarrow{\mathrm{a.s.}}
\frac{1}{1/W_1}
=
W_1.
\] 
Since $0<A_N\leq N$, we have
\[
(A_N\log A_N)^+ \leq N\log N,
\]
while, since $t\log t\geq -1/e$ for all $t>0$,
\[
(A_N\log A_N)^- \leq \frac{1}{e}.
\]
Thus both the positive and negative parts of $A_N\log A_N$ have finite
expectation for every $N$, and hence
\[
\E[\widehat D_N]
=
\E[A_N\log A_N]
=
\E[(A_N\log A_N)^+]
-
\E[(A_N\log A_N)^-].
\]
Similarly, since $D_{\mathrm{KL}}(p\|q)
=\E[W_1\log W_1]$ exists and is finite, $W_1\log W_1$ is integrable,
and therefore
\[
D_{\mathrm{KL}}(p\|q)
=
\E[(W_1\log W_1)^+]
-
\E[(W_1\log W_1)^-].
\]
It therefore suffices to prove
\[
\E[(A_N\log A_N)^+]
\longrightarrow
\E[(W_1\log W_1)^+]
\]
and
\[
\E[(A_N\log A_N)^-]
\longrightarrow
\E[(W_1\log W_1)^-].
\]
Indeed, subtracting the two convergent sequences then yields
\[
\E[\widehat D_N]
\longrightarrow
\E[W_1\log W_1]
=
D_{\mathrm{KL}}(p\|q).
\]
For the negative part, we will apply the Dominated Convergence Theorem.
Since $A_N\to W_1$ almost surely and the map
\(
x\mapsto (x\log x)^-
\)
is continuous on $[0,\infty)$, with the convention $0\log 0=0$, we have
\[
(A_N\log A_N)^-
\xrightarrow{\mathrm{a.s.}}
(W_1\log W_1)^-.
\]
Moreover, the function $x\mapsto x\log x$ is bounded below by $-1/e$
on $[0,\infty)$, and hence
\[
0\leq (A_N\log A_N)^-\leq \frac{1}{e}
\qquad\text{for every }N.
\]
Thus the sequence is dominated by the integrable constant $1/e$.
The dominated convergence theorem therefore yields
\[
\E[(A_N\log A_N)^-]
\longrightarrow
\E[(W_1\log W_1)^-].
\]

For the positive part, we now prove that
\[
\E[(A_N\log A_N)^+]
\longrightarrow
\E[(W_1\log W_1)^+].
\]
Unlike the negative part, $(A_N\log A_N)^+$ is not uniformly bounded.
We therefore split its expectation into  two events. In particular, define
\[
G_N
\defeq
\left\{
\sum_{i=2}^N W_i\geq\frac{N-1}{2}
\right\}.
\]
Then
\begin{align*}
\E[(A_N\log A_N)^+]
&=
\E\left[(A_N\log A_N)^+\mathbf{1}_{G_N}\right] +
\E\left[(A_N\log A_N)^+\mathbf{1}_{G_N^c}\right].
\end{align*}
We treat these two terms separately.

\medskip
\noindent
\textit{First Term.}
\noindent
On the good event $G_N$, we apply the dominated convergence theorem.
By the strong law of large numbers,
\[
\frac{1}{N-1}\sum_{i=2}^N W_i
\xrightarrow{\mathrm{a.s.}}1,
\]
so $\mathbf{1}_{G_N}\to1$ almost surely. Since $A_N\to W_1$
almost surely, it follows that
\[
(A_N\log A_N)^+\mathbf{1}_{G_N}
\xrightarrow{\mathrm{a.s.}}
(W_1\log W_1)^+.
\]

Moreover, on $G_N$,
\[
A_N
=
\frac{N W_1}{W_1+\sum_{i=2}^N W_i}
\leq
\frac{2N}{N-1}W_1
\leq 4W_1,
\]
for $N\geq2$. Hence, on $G_N$,
\[
(A_N\log A_N)^+
\leq
(4W_1\log(4W_1))^+.
\]
Moreover,
\[
(4W_1\log(4W_1))^+
\leq
4(\log 4)W_1+4(W_1\log W_1)^+,
\]
whose expectation is finite since $\E[W_1]=1$ and
$\E[(W_1\log W_1)^+]<\infty$, as shown above. Therefore, by the
dominated convergence theorem,
\[
\E\!\left[
(A_N\log A_N)^+\mathbf{1}_{G_N}
\right]
\longrightarrow
\E[(W_1\log W_1)^+].
\]
\medskip
\noindent
\textit{Second Term.}
It remains to prove that
\[
\E\left[
(A_N\log A_N)^+\mathbf{1}_{G_N^c}
\right]
\longrightarrow0.
\]
We will first use the exponential Markov inequality to show that $\Pr(G_N^c)$ decays geometrically in $N$, and then combine this with
a deterministic upper bound on $(A_N\log A_N)^+$.

Set
\[
T_N\defeq\sum_{i=2}^N W_i.
\]
Then
\[
G_N^c
=
\left\{
T_N<\frac{N-1}{2}
\right\}.
\]
Fix $\theta>0$. Since the map $x\mapsto e^{-\theta x}$ is decreasing,
\[
\left\{
T_N<\frac{N-1}{2}
\right\}
=
\left\{
e^{-\theta T_N}>
e^{-\theta(N-1)/2}
\right\}.
\]
Applying Markov's inequality to the nonnegative random variable
$e^{-\theta T_N} \leq 1$ gives
\begin{align*}
\Pr(G_N^c)
&\leq
\frac{\E[e^{-\theta T_N}]}
     {e^{-\theta(N-1)/2}}\\
&=
e^{\theta(N-1)/2}\E[e^{-\theta T_N}].
\end{align*}
Since $W_2,\ldots,W_N$ are independent and identically distributed,
\[
\E[e^{-\theta T_N}]
=
\prod_{i=2}^N\E[e^{-\theta W_i}]
=
\left(\E[e^{-\theta W_1}]\right)^{N-1}.
\]
Therefore
\[
\Pr(G_N^c)
\leq
\left(
e^{\theta/2}\E[e^{-\theta W_1}]
\right)^{N-1}.
\]

We now show that $\theta>0$ can be chosen so that the quantity in
parentheses is strictly smaller than one. By \citet[Chapter~8.2.4]{le2022measure}, for a random variable $Z\geq0$ and
$\E[Z]<\infty$,  its Laplace transform
\[
L_Z(\theta)\defeq \E[e^{-\theta Z}]
\]
satisfies
\[
L_Z'(0+)=-\E[Z].
\]
Applying this with $Z=W_1$ and using $\E[W_1]=1$, we obtain
\[
\lim_{\theta\downarrow0}
\frac{\E[e^{-\theta W_1}]-1}{\theta}
=-1.
\]
Equivalently, defining
\[
r(\theta)
\defeq
\E[e^{-\theta W_1}]-(1-\theta),
\]
we have
\[
\frac{r(\theta)}{\theta}
=
\frac{\E[e^{-\theta W_1}]-1}{\theta}+1
\longrightarrow0.
\]
Thus $r(\theta)=o(\theta)$ as $\theta\downarrow0$; that is, the
remainder $r(\theta)$ is negligible compared with $\theta$ as
$\theta\downarrow0$. Hence
\[
\E[e^{-\theta W_1}]
=
1-\theta+o(\theta).
\]
Moreover, by Taylor's theorem,
\[
e^{\theta/2}
=
1+\frac{\theta}{2}+o(\theta).
\]
Combining the two expansions gives
\begin{align*}
e^{\theta/2}\E[e^{-\theta W_1}]
&=
\left(1+\frac{\theta}{2}+o(\theta)\right)
\left(1-\theta+o(\theta)\right)\\
&=
1-\frac{\theta}{2}+o(\theta).
\end{align*}
In particular, for all sufficiently small $\theta>0$,
\[
e^{\theta/2}\E[e^{-\theta W_1}]<1.
\]
Fix such a $\theta$ and define
\[
\rho
\defeq
e^{\theta/2}\E[e^{-\theta W_1}],
\qquad 0<\rho<1.
\]
Therefore,
\[
\Pr(G_N^c)\leq \rho^{N-1}.
\]

It remains to control the size of the integrand on $G_N^c$. Since
\[
A_N
=
\frac{N W_1}{\sum_{j=1}^N W_j}
\leq N,
\]
we have
\[
(A_N\log A_N)^+
\leq N\log N.
\]
Therefore
\begin{align*}
0
&\leq
\E\left[
(A_N\log A_N)^+\mathbf{1}_{G_N^c}
\right]\\
&\leq
N\log N\,\Pr(G_N^c)\\
&\leq
N\log N\,\rho^{N-1}.
\end{align*}
Since $0<\rho<1$, let $c\defeq-\log\rho>0$. Then
\[
N\log N\,\rho^{N-1}
=
e^c N\log N\,e^{-cN}.
\]
By two applications of L'H\^opital's rule,
\[
\lim_{z\to\infty}\frac{z\log z}{e^{cz}}
=
\lim_{z\to\infty}\frac{\log z+1}{ce^{cz}}
=
\lim_{z\to\infty}\frac{1/z}{c^2e^{cz}}
=
0.
\]
Therefore,
\[
N\log N\,\rho^{N-1}\longrightarrow0.
\]
Thus
\[
\E\left[
(A_N\log A_N)^+\mathbf{1}_{G_N^c}
\right]
\longrightarrow0.
\]

Combining both contributions, we conclude that
\[
\E[(A_N\log A_N)^+]
\longrightarrow
\E[(W_1\log W_1)^+],
\]
which completes the proof.
\end{proof}

\section{A Contraction Lemma }\label{supp_lemma}
We first provide the following result  which will be used in the proof of this part. 
\begin{lemma}[Contraction Lemma]\label{lemma_mean}
Let $T\geq0$ be an integrable random variable, and let
$g:(0,\infty)\to(0,\infty)$. Suppose there exists $a>0$ such that
\[
\min\{t,a\}
\le
g(t)
\le
\max\{t,a\}
\qquad
\text{for all } t>0,
\]
and
\[
\mathbb{E}[g(T)]
=
\mathbb{E}[T].
\]
Then, for every convex differentiable function $f$ for which the
expectations are finite,
\[
\mathbb{E}[f(g(T))]
\le
\mathbb{E}[f(T)].
\]
In particular, for $f(t)=t\log t$,
\[
\mathbb{E}[g(T)\log g(T)]
\le
\mathbb{E}[T\log T].
\]
\end{lemma}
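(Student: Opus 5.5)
The idea is to use the tangent line of $f$ at the pivot point $a$. Convexity gives an inequality for each realization, and the mean constraint $\E[g(T)]=\E[T]$ then removes the linear part.

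\textbf{Step 1: pointwise inequality.} We show that for every $t>0$,
\[
f(g(t))-f(t)\le f'(a)\,\bigl(g(t)-t\bigr).
\]
There are two cases.

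If $t\ge a$, the hypothesis gives $a\le g(t)\le t$. Write $g(t)=\theta t+(1-\theta)a$ with $\theta\in[0,1]$. The chord from $a$ to $t$ lies above $f$, so
\[
f(g(t))\le \theta f(t)+(1-\theta)f(a).
\]
Hence
\[
f(g(t))-f(t)\le (1-\theta)\bigl(f(a)-f(t)\bigr).
\]
The tangent at $a$ lies below $f$, so $f(t)\ge f(a)+f'(a)(t-a)$, i.e. $f(a)-f(t)\le -f'(a)(t-a)$. Therefore
\[
f(g(t))-f(t)\le -(1-\theta)f'(a)(t-a)=f'(a)\bigl(g(t)-t\bigr),
\]
because $g(t)-t=(1-\theta)(a-t)$.

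If $t<a$, the hypothesis gives $t\le g(t)\le a$. The same chord-plus-tangent argument on $[t,a]$ yields the same inequality.

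An equivalent and quicker route is to note that $\phi(s)=f(s)-f'(a)s$ is convex with minimum at $a$. Hence $\phi$ is nonincreasing on $(0,a]$ and nondecreasing on $[a,\infty)$. Since $g(t)$ lies between $t$ and $a$, this gives $\phi(g(t))\le\phi(t)$, which is exactly the inequality above. I would present this version.

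\textbf{Step 2: take expectations.} Apply Step 1 with $t=T$ and take expectations:
\[
\E[f(g(T))]-\E[f(T)]\le f'(a)\bigl(\E[g(T)]-\E[T]\bigr)=0.
\]

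\textbf{Main obstacle: integrability.} Splitting the expectation in Step 2 requires all terms to be finite. The expectations of $f(g(T))$ and $f(T)$ are finite by assumption. Integrability of $T$ is assumed, and $0<g(T)\le \max\{T,a\}\le T+a$, so $g(T)$ is integrable as well. The linear term is therefore finite and cancels.

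\textbf{The case $f(t)=t\log t$.} Here $f'(a)=\log a+1$, so the argument applies directly on $(0,\infty)$. The condition $T\ge0$ versus $t>0$ matters only if $T=0$ occurs with positive probability. In that case we use the convention $0\log 0=0$ and need $g$ evaluated only at positive values, so I would assume $T>0$ almost surely, as holds in the intended application with $T=W_1$.
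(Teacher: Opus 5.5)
Your proof is correct and is essentially the paper's argument. The paper also subtracts the tangent line at $a$ and notes that $R_a(s)=f(s)-f(a)-f'(a)(s-a)$ (your $\phi$ up to a constant) is convex with minimum at $a$, hence monotone on each side of $a$; it then concludes $R_a(g(t))\le R_a(t)$ pointwise and cancels the affine part using $\E[g(T)]=\E[T]$. Your explicit integrability check ($0<g(T)\le T+a$) and your caveat that $T>0$ almost surely are minor points the paper leaves implicit.
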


\begin{proof}
Let
\[
\ell_a(t)
=
f(a)+f'(a)(t-a)
\]
be the tangent line to $f$ at $a$, and define
\[
R_a(t)
=
f(t)-\ell_a(t).
\]
Since $f$ is convex and $\ell_a$ is linear, we have $R_a$ is also convex. Furthermore,
\[
R_a'(t)
=
f'(t)-f'(a),
\]
thus $R_a(t)$ achieves minimal value at $a$. For this reason, we have $R_a$ is nonincreasing on $(0,a]$ and nondecreasing on
$[a,\infty)$. Because $g(t)$ lies between $t$ and $a$, i.e.:
\[
\min\{t,a\}
\le
g(t)
\le
\max\{t,a\}
\qquad
\text{for all } t \geq0,
\]

\begin{figure}[t]
    \centering

    \begin{subfigure}[t]{0.48\textwidth}
        \centering
        \begin{tikzpicture}
        \begin{axis}[
            width=\linewidth,
            height=5.2cm,
            axis lines=left,
            xlabel={},
            ylabel={},
            xmin=0, xmax=2,
            ymin=0, ymax=1.05,
            xtick=\empty,
            ytick=\empty,
            samples=200,
            domain=0:2,
            grid=none,
            title={$t<a,\qquad t<g(t)<a$},
            clip=false,
        ]

        \addplot[
            blue!75!black,
            thick
        ] {(x-1)^2};

        \addplot[dashed] coordinates {
            (1,0)
            (1,1.05)
        };

        \addplot[densely dashed, thin] coordinates {
            (0,0.4225)
            (0.35,0.4225)
        };

        \addplot[densely dashed, thin] coordinates {
            (0,0.1024)
            (0.68,0.1024)
        };

        \addplot[densely dashed, thin] coordinates {
            (0.35,0)
            (0.35,0.4225)
        };

        \addplot[densely dashed, thin] coordinates {
            (0.68,0)
            (0.68,0.1024)
        };

        \addplot[only marks, mark=*] coordinates {
            (0.35,0.4225)
            (0.68,0.1024)
            (1,0)
        };

        \node[anchor=north] at (axis cs:0.35,-0.035) {$t$};
        \node[anchor=north] at (axis cs:0.68,-0.035) {$g(t)$};
        \node[anchor=north] at (axis cs:1,-0.035) {$a$};

        \node[anchor=east] at (axis cs:-0.03,0.4225)
            {$R_a(t)$};

        \node[anchor=east] at (axis cs:-0.03,0.1024)
            {$R_a(g(t))$};

        \node[
            anchor=west,
            text=blue!75!black
        ] at (axis cs:2.02,0) {$s$};

        \node[
            anchor=south east,
            text=blue!75!black
        ] at (axis cs:-0.03,1.06) {$R_a(s)$};

        \end{axis}
        \end{tikzpicture}

        \caption{
        For $t<a$, we have $t<g(t)<a$, and hence
        $R_a(g(t))\le R_a(t)$.
        }
    \end{subfigure}
    \hfill
    \begin{subfigure}[t]{0.48\textwidth}
        \centering
        \begin{tikzpicture}
        \begin{axis}[
            width=\linewidth,
            height=5.2cm,
            axis lines=left,
            xlabel={},
            ylabel={},
            xmin=0, xmax=2,
            ymin=0, ymax=1.05,
            xtick=\empty,
            ytick=\empty,
            samples=200,
            domain=0:2,
            grid=none,
            title={$t>a,\qquad a<g(t)<t$},
            clip=false,
        ]

        \addplot[
            blue!75!black,
            thick
        ] {(x-1)^2};

        \addplot[dashed] coordinates {
            (1,0)
            (1,1.05)
        };

        \addplot[densely dashed, thin] coordinates {
            (0,0.1225)
            (1.35,0.1225)
        };

        \addplot[densely dashed, thin] coordinates {
            (0,0.5625)
            (1.75,0.5625)
        };

        \addplot[densely dashed, thin] coordinates {
            (1.35,0)
            (1.35,0.1225)
        };

        \addplot[densely dashed, thin] coordinates {
            (1.75,0)
            (1.75,0.5625)
        };

        \addplot[only marks, mark=*] coordinates {
            (1,0)
            (1.35,0.1225)
            (1.75,0.5625)
        };

        \node[anchor=north] at (axis cs:1,-0.035) {$a$};
        \node[anchor=north] at (axis cs:1.35,-0.035) {$g(t)$};
        \node[anchor=north] at (axis cs:1.75,-0.035) {$t$};

        \node[anchor=east] at (axis cs:-0.03,0.1225)
            {$R_a(g(t))$};

        \node[anchor=east] at (axis cs:-0.03,0.5625)
            {$R_a(t)$};

        \node[
            anchor=west,
            text=blue!75!black
        ] at (axis cs:2.02,0) {$s$};

        \node[
            anchor=south east,
            text=blue!75!black
        ] at (axis cs:-0.03,1.06) {$R_a(s)$};

        \end{axis}
        \end{tikzpicture}

        \caption{
        For $t>a$, we have $a<g(t)<t$, and hence
        $R_a(g(t))\le R_a(t)$.
        }
    \end{subfigure}

    \caption{
    Illustration of contraction toward the anchor $a$, using
    $R_a(s)=(s-a)^2$.
    }
    \label{Ra_contraction}
\end{figure}
\noindent Therefore we have
\[
R_a(g(t))
\le
R_a(t)
\]
for every $t>0$, illustrated in Figure \ref{Ra_contraction}. Taking expectations gives
\[
\mathbb{E}[f(g(T))-\ell_a(g(T))]
\le
\mathbb{E}[f(T)-\ell_a(T)].
\]
Since $\ell_a$ is affine, we have
\[
\ell_a(t)
=
f(a)+f'(a)(t-a)
=
f'(a)t + f(a)-af'(a).
\]
Therefore,
\[
\begin{aligned}
\mathbb{E}[\ell_a(g(T))]
&=
\mathbb{E}\left[
f(a)+f'(a)(g(T)-a)
\right] \\
&=
f(a)+f'(a)\left(\mathbb{E}[g(T)]-a\right) \\
&=
f(a)+f'(a)\left(\mathbb{E}[T]-a\right) \\
&=
\mathbb{E}\left[
f(a)+f'(a)(T-a)
\right] \\
&=
\mathbb{E}[\ell_a(T)],
\end{aligned}
\]
where the third equality follows from
$\mathbb{E}[g(T)]=\mathbb{E}[T]$.
Consequently,
\[
\mathbb{E}[\ell_a(g(T))]
=
\mathbb{E}[\ell_a(T)].
\]
Thus, from
\[
\mathbb{E}[f(g(T))-\ell_a(g(T))]
\le
\mathbb{E}[f(T)-\ell_a(T)],
\]
the affine terms on both sides are equal and cancel, yielding
\[
\mathbb{E}[f(g(T))]
\le
\mathbb{E}[f(T)],
\] which completes the proof.
\end{proof}

\section{Proof of Theorem \texorpdfstring{\ref{prop:kl_estimator}}{prop:kl\_estimator}}\label{app:b}
Let $p$ and $q$ be probability distributions on $\mathcal X$ with the same
support. Define the likelihood ratio
\[
w(x)=\frac{p(x)}{q(x)}
\]
for $q$-almost every $x\in\mathcal X$. Let
\[
X_1,\ldots,X_N\overset{\mathrm{i.i.d.}}{\sim}q,
\qquad
W_i=w(X_i),
\qquad
S=\sum_{i=1}^N W_i,
\]
where logarithms are natural and $W_i>0$ almost surely. Define
\[
\lambda_i=\frac{W_i}{S}.
\]
Moreover,
\[
\mathbb E_{X_i\sim q}[W_i]
=
\mathbb E_{X_i\sim q}[w(X_i)]
=
\int_{\mathcal X}p(x)\,dx
=
1.
\]

\noindent We prove the inequality
\[
{
\mathbb E_{X_{1:N}\sim q^N}
\left[
N\lambda_1\log(N\lambda_1)
\right]
\le
D_{\mathrm{KL}}(p\|q)+\log 2.
}
\]
By exchangeability, this is equivalent to
\[
\mathbb E_{X_{1:N}\sim q^N}
\left[
\sum_{i=1}^N\lambda_i\log(N\lambda_i)
\right]
\le
D_{\mathrm{KL}}(p\|q)+\log 2.
\tag{1}
\]

\noindent For $N=1$, the result is immediate. Henceforth, assume $N\ge2$.
\subsection{Proof Overview}

\paragraph{Change of law.}
Let
\[
R=q^N
\]
denote the joint law of $(X_1,\ldots,X_N)$, and define
\[
A(X_{1:N})
\defeq 
\frac{N W_1}{S}
=
N\lambda_1,
\qquad
S=\sum_{i=1}^N W_i.
\]
We introduce the tilted law
\[
\widetilde R(X_{1:N})
\defeq 
A(X_{1:N})R(X_{1:N}).
\]

\begin{center}
\fbox{
\begin{minipage}{0.92\linewidth}
\textbf{Claim 1.}
$\widetilde R$ is a probability law.

\medskip

\textit{Proof.}
Clearly, $\widetilde R(X_{1:N})\ge 0$. It remains to verify that
$\widetilde R$ integrates to one.

By exchangeability of $W_1,\ldots,W_N$ under $R=q^N$,
\[
\begin{aligned}
\mathbb E_R[A]
&=
N\mathbb E_R\!\left[\frac{W_1}{S}\right] \\
&=
\mathbb E_R\!\left[
\sum_{i=1}^N\frac{W_i}{S}
\right] \\
&=
\mathbb E_R[1] \\
&=1.
\end{aligned}
\]
Therefore,
\[
\begin{aligned}
\int_{\mathcal X^N}
\widetilde R(x_{1:N})\,dx_{1:N}
&=
\int_{\mathcal X^N}
A(x_{1:N})R(x_{1:N})\,dx_{1:N} \\
&=
\mathbb E_R[A] \\
&=1.
\end{aligned}
\]
Hence, $\widetilde R$ is a valid probability law.
\hfill $\square$
\end{minipage}
}
\end{center}

\noindent We have
\[
\frac{\widetilde R(x_{1:N})}{R(x_{1:N})}
=
A(x_{1:N})
=
\frac{N W_1}{S}
=
N\lambda_1.
\]
Therefore, by the definition of the KL divergence,
\[
\begin{aligned}
D_{\mathrm{KL}}(\widetilde R\|R)
&=
\int_{\mathcal X^N}
\widetilde R(x_{1:N})
\log\!\left(
\frac{\widetilde R(x_{1:N})}
     {R(x_{1:N})}
\right)
\,dx_{1:N} \\
&=
\int_{\mathcal X^N}
A(x_{1:N})R(x_{1:N})
\log A(x_{1:N})
\,dx_{1:N} \\
&=
\mathbb E_R[A\log A] \\
&=
\mathbb E_R\!\left[
\frac{N W_1}{S}
\log\!\left(\frac{N W_1}{S}\right)
\right] \\
&=
\mathbb E_R\!\left[
N\lambda_1\log(N\lambda_1)
\right].
\end{aligned}
\tag{2}
\]

\noindent Therefore, proving
\[
\mathbb E_R\!\left[
\sum_{i=1}^N
\lambda_i\log(N\lambda_i)
\right]
\le
D_{\mathrm{KL}}(p\|q)+\log 2
\]
is equivalent to proving
\[
{
D_{\mathrm{KL}}(\widetilde R\|R)
\le
D_{\mathrm{KL}}(p\|q)+\log 2.
}
\]

\paragraph{Chain Rule for Relative Entropy.}
To establish the desired inequality, separate the first coordinate from the
remaining coordinates. Define
\[
Y=(X_2,\ldots,X_N),
\qquad
T=\sum_{i=2}^N W_i,
\]
so that
\[
S=W_1+T.
\]

\noindent Under the original law
\[
r=q^N=q\otimes q^{N-1},
\]
the random variable \(X_1\) has distribution \(q\), the vector \(Y\) has
distribution \(q^{N-1}\), and \(X_1\) is independent of \(Y\). In particular,
for every fixed value \(y\) of \(Y\),
\[
r_{X_1\mid Y=y}=q.
\]

\noindent Under the tilted law \(\widetilde r\), however, \(X_1\) and \(Y\) are generally
not independent. We denote by
\[
\widetilde r_{X_1\mid Y=y}
\]
the conditional distribution of \(X_1\) when \(Y=y\), and by
\(\widetilde r_Y\) the marginal distribution of \(Y\).

\begin{center}
\fbox{
\begin{minipage}{0.94\linewidth}
\textbf{Claim 2. Divergence decomposition and termwise bounds.}

With
\[
Y=(X_2,\ldots,X_N),
\]
the chain rule for relative entropy gives
\[
D_{\mathrm{KL}}(\widetilde r\|r)
=
\underbrace{
D_{\mathrm{KL}}(\widetilde r_Y\|q^{N-1})
}_{\text{\emph{marginal divergence}}}
+
\underbrace{
\mathbb E_{Y\sim\widetilde r_Y}
\left[
D_{\mathrm{KL}}\!\left(
\widetilde r_{X_1\mid Y}
\,\middle\|\,
q
\right)
\right]
}_{\text{\emph{average conditional divergence}}}.
\tag{3}
\]

We will prove the following two bounds separately:
\[
D_{\mathrm{KL}}(\widetilde r_Y\|q^{N-1})
\le \log 2,
\tag{4}
\]
and
\[
\mathbb E_{Y\sim\widetilde r_Y}
\left[
D_{\mathrm{KL}}\!\left(
\widetilde r_{X_1\mid Y}
\,\middle\|\,
q
\right)
\right]
\le D_{\mathrm{KL}}(p\|q).
\tag{5}
\]

Consequently,
\[
D_{\mathrm{KL}}(\widetilde r\|r)
\le
D_{\mathrm{KL}}(p\|q)+\log 2.
\tag{6}
\]
No additional moment assumptions are required beyond the stated positivity
assumption on $W_i$ and $D_{\mathrm{KL}}(p\|q)<\infty$.

\end{minipage}
}
\end{center}

\noindent Before proving each term in the next sections, we note that the conditional term is
\[
\mathbb E_{Y\sim\widetilde r_Y}
\left[
D_{\mathrm{KL}}\!\left(
\widetilde r_{X_1\mid Y}
\,\middle\|\,
q
\right)
\right]
=
\int
D_{\mathrm{KL}}\!\left(
\widetilde r_{X_1\mid Y=y}
\,\middle\|\,
q
\right)
\widetilde r_Y(y)\,dy.
\]
Thus, the chain rule decomposes the joint divergence into the divergence of
the marginal law of \(Y\) and the average conditional divergence of \(X_1\)
given \(Y=y\).

\subsection{Bounding the Conditional Contribution}

In this section, we prove that
\[
\mathbb E_{Y\sim\widetilde r_Y}
\left[
D_{\mathrm{KL}}\!\left(
\widetilde r_{X_1\mid Y}
\,\middle\|\,
q
\right)
\right]
\le
D_{\mathrm{KL}}(p\|q).
\]

\noindent We establish the stronger pointwise bound
\[
D_{\mathrm{KL}}\!\left(
\widetilde r_{X_1\mid Y=x_{2:N}}
\,\middle\|\,
q
\right)
\le
D_{\mathrm{KL}}(p\|q)
\]
for every \(x_{2:N}\) for which the conditional distribution is well defined.
Once this pointwise inequality is proved, taking expectation with respect to
\(Y\sim\widetilde r_Y\) immediately yields the required quantity since
$D_{\mathrm{KL}}(p\|q)$ is a constant. Before proving the result, we present \Cref{contraction_corollary}, a main tool for proving this result. The proof is
presented at the end of this section.

\begin{center}
\fbox{
\begin{minipage}{0.94\linewidth}

\begin{corollary}[Application of Contraction Lemma]\label{contraction_corollary}
Let \(W>0\) and \(W\sim p_W\) satisfy \(\E[W]=1\). For fixed \(t\ge 0\), define
\[
m_t
=
\E\!\left[\frac{W}{W+t}\right],
\qquad
h_t(w)
=
\frac{w/(w+t)}{m_t}.
\]
Then \(h_t\) is a probability density relative to the law of \(W\), and
\[
\E\!\left[h_t(W)\log h_t(W)\right]
\le
\E[W\log W],
\]
where all expectations are taken with respect to \(W\sim p_W\).
\end{corollary}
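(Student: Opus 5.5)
The plan is to apply the Contraction Lemma (\Cref{lemma_mean}) with $T=W$, $f(t)=t\log t$, and $g=h_t$. Two facts are needed. First, $h_t$ is a density with respect to the law of $W$: since $0<W/(W+t)\le 1$, the mean $m_t$ is finite and positive, and $\E[h_t(W)]=1$ by construction. Second, $\E[g(W)]=\E[W]$, which holds because $\E[h_t(W)]=1=\E[W]$. When $t=0$ we have $h_0\equiv 1$, and the claim reduces to $0\le \E[W\log W]$. This is Jensen's inequality (or nonnegativity of $D_{\mathrm{KL}}(p\|q)$), so we may assume $t>0$.

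The main step is to find an anchor $a>0$ with $\min\{w,a\}\le h_t(w)\le\max\{w,a\}$ for all $w>0$. Define
\[
\phi(w)\defeq h_t(w)-w=\frac{w}{m_t(w+t)}-w .
\]
The function $w\mapsto \frac{1}{m_t(w+t)}$ is strictly decreasing. Hence $h_t(w)/w$ is decreasing, and $h_t(w)\ge w$ exactly when $w\le w^\ast$, where $w^\ast$ solves $m_t(w^\ast+t)=1$, i.e.\ $w^\ast=1/m_t-t$.

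We check that $w^\ast>0$. By Jensen's inequality applied to the concave map $w\mapsto w/(w+t)$, we get $m_t\le 1/(1+t)<1/t$. Therefore $w^\ast\ge 1>0$.

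We take $a\defeq w^\ast$, at which $h_t(a)=a$. Since $h_t$ is increasing in $w$:
\begin{itemize}[leftmargin=*]
\item for $w<a$, we have $w\le h_t(w)\le h_t(a)=a$;
\item for $w>a$, we have $a=h_t(a)\le h_t(w)\le w$.
\end{itemize}
This is exactly the sandwich condition of \Cref{lemma_mean}. The function $f(t)=t\log t$ is convex and differentiable on $(0,\infty)$, so the lemma gives $\E[h_t(W)\log h_t(W)]\le \E[W\log W]$.

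It remains to check finiteness of the expectations, which I expect to be the main (though mild) obstacle. The left side is finite because $0<h_t\le 1/m_t$ is bounded. The right side, $\E[W\log W]$, is finite by assumption when it equals $D_{\mathrm{KL}}(p\|q)$; in general it lies in $[-1/e,\infty]$, and the inequality is trivial if it is $+\infty$.

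The tangent-line cancellation in \Cref{lemma_mean} uses $\E[\ell_a(T)]$, which is finite since $\E[W]=1$. The monotonicity argument for $R_a$ needs only convexity. So the lemma applies without any extra moment conditions, completing the proof.
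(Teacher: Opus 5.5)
Your proof is correct and follows the same route as the paper. Both apply the Contraction Lemma with $f(w)=w\log w$ and anchor $a=1/m_t-t$, the fixed point of the increasing map $h_t$, and both use the sign of $h_t(w)-w$ (equivalently, monotonicity of $h_t(w)/w$) to obtain the sandwich condition. The remaining differences are minor. You show $a>0$ via Jensen on the concave map $w\mapsto w/(w+t)$, which even gives $a\ge 1$, whereas the paper uses the identity $1-tm_t=\E[W^2/(W+t)]>0$. You also treat the case $t=0$ and the finiteness of the expectations explicitly, where the paper absorbs them into the general argument.
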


\end{minipage}
}
\end{center}

\paragraph{Applying the Corollary.}

For the remainder of this section, fix a realization
\[
Y=x_{2:N},
\qquad
t=\sum_{i=2}^N w(x_i),
\qquad
w(x)=\frac{p(x)}{q(x)}.
\]
Let \(X\sim q\), define
\[
W=w(X),
\]
and denote by \(p_W\) the probability law of \(W\) induced by \(X\sim q\).
We will show that the likelihood ratio of
\(\widetilde r_{X_1\mid Y=x_{2:N}}\) with respect to \(q\) is
\[
\frac{
\widetilde r_{X_1\mid Y=x_{2:N}}(x)
}{
q(x)
}
=
h_t(w(x)),
\]
where
\[
h_t(u)
=
\frac{\dfrac{u}{u+t}}{m_t},
\qquad
m_t
=
\mathbb E_{W\sim p_W}
\left[
\frac{W}{W+t}
\right].
\]

\noindent Once this identity is established, applying directly  \Cref{contraction_corollary} yields the desired pointwise inequality. To see this, we note that the
conditional divergence can be written as
\[
\begin{aligned}
D_{\mathrm{KL}}\!\left(
\widetilde r_{X_1\mid Y=x_{2:N}}
\,\middle\|\,
q
\right)
&=
\int_{\mathcal X}
\widetilde r_{X_1\mid Y=x_{2:N}}(x)
\log
\frac{
\widetilde r_{X_1\mid Y=x_{2:N}}(x)
}{
q(x)
}
\,dx \\
&=
\int_{\mathcal X}
h_t(w(x))\log h_t(w(x))\,q(x)\,dx \\
&=
\mathbb E_{X\sim q}
\left[
h_t(w(X))\log h_t(w(X))
\right].
\end{aligned}
\tag{7}
\]

\noindent By the law of the unconscious statistician, for every measurable
function \(g\),
\[
\mathbb E_{X\sim q}[g(w(X))]
=
\mathbb E_{W\sim p_W}[g(W)].
\]
Applying this identity with
\[
g(u)=h_t(u)\log h_t(u)
\]
gives
\[
\mathbb E_{X\sim q}
\left[
h_t(w(X))\log h_t(w(X))
\right]
=
\mathbb E_{W\sim p_W}
\left[
h_t(W)\log h_t(W)
\right].
\tag{8}
\]

\noindent Before applying the \Cref{contraction_corollary}, we verify that \(W\) satisfies
its normalization assumption. Indeed,
\[
\begin{aligned}
\mathbb E_{W\sim p_W}[W]
&=
\mathbb E_{X\sim q}[w(X)] \\
&=
\int_{\mathcal X}
\frac{p(x)}{q(x)}q(x)\,dx \\
&=
\int_{\mathcal X}p(x)\,dx \\
&=1.
\end{aligned}
\]

\noindent Therefore, the \Cref{contraction_corollary} applies to \(W\sim p_W\) and yields
\[
\mathbb E_{W\sim p_W}
\left[
h_t(W)\log h_t(W)
\right]
\le
\mathbb E_{W\sim p_W}[W\log W].
\tag{9}
\]

\noindent Applying the law of the unconscious statistician once more,
\[
\begin{aligned}
\mathbb E_{W\sim p_W}[W\log W]
&=
\mathbb E_{X\sim q}
\left[
w(X)\log w(X)
\right] \\
&=
\int_{\mathcal X}
w(x)\log w(x)\,q(x)\,dx \\
&=
\int_{\mathcal X}
\frac{p(x)}{q(x)}
\log\!\left(\frac{p(x)}{q(x)}\right)
q(x)\,dx \\
&=
\int_{\mathcal X}
p(x)\log\!\left(\frac{p(x)}{q(x)}\right)\,dx \\
&=
D_{\mathrm{KL}}(p\|q).
\end{aligned}
\tag{10}
\]

\noindent Combining \textup{(7)}--\textup{(10)}, we obtain
\[
D_{\mathrm{KL}}\!\left(
\widetilde r_{X_1\mid Y=x_{2:N}}
\,\middle\|\,
q
\right)
\le
D_{\mathrm{KL}}(p\|q).
\]

\noindent Thus, once the conditional likelihood-ratio identity is verified,
the desired pointwise inequality follows directly from \Cref{contraction_corollary}.
It therefore remains to derive the conditional likelihood-ratio identity
\[
\frac{
\widetilde r_{X_1\mid Y=x_{2:N}}(x)
}{
q(x)
}
=
h_t(w(x)).
\]

\paragraph{Deriving the Conditional Likelihood Ratio.}

Fix a realization
\[
Y=x_{2:N},
\qquad
t=\sum_{i=2}^N w(x_i),
\qquad
w(x)=\frac{p(x)}{q(x)}.
\]
After conditioning on \(Y=x_{2:N}\), the quantity \(t\) is fixed. Recall that,
under the original law \(r=q^N\), \(X_1\) is independent of \(Y\). Therefore,
\[
r_{X_1\mid Y=x_{2:N}}(x)=q(x).
\]

Under the tilted law \(\widetilde r\), the conditional distribution of \(X_1\)
given \(Y=x_{2:N}\) is obtained by normalizing the joint density:
\[
\widetilde r_{X_1\mid Y=x_{2:N}}(x)
=
\frac{
\widetilde r(x,x_{2:N})
}{
\displaystyle
\int_{\mathcal X}
\widetilde r(z,x_{2:N})\,dz
}.
\]
Using
\[
\widetilde r(x_{1:N})
=
A(x_{1:N})r(x_{1:N})
\]
and
\[
r(x,x_{2:N})
=
q(x)\prod_{i=2}^N q(x_i),
\]
we obtain
\[
\begin{aligned}
\widetilde r_{X_1\mid Y=x_{2:N}}(x)
&=
\frac{
A(x,x_{2:N})q(x)
\displaystyle\prod_{i=2}^N q(x_i)
}{
\displaystyle
\int_{\mathcal X}
A(z,x_{2:N})q(z)
\prod_{i=2}^N q(x_i)\,dz
} \\
&=
\frac{
A(x,x_{2:N})q(x)
}{
\displaystyle
\int_{\mathcal X}
A(z,x_{2:N})q(z)\,dz
}.
\end{aligned}
\tag{11}
\]

\noindent For the fixed realization \(x_{2:N}\), we have
\[
A(x,x_{2:N})
=
\frac{Nw(x)}{w(x)+t}.
\]
Substituting this expression into \textup{(11)} gives
\[
\begin{aligned}
\widetilde r_{X_1\mid Y=x_{2:N}}(x)
&=
\frac{
\dfrac{Nw(x)}{w(x)+t}q(x)
}{
\displaystyle
\int_{\mathcal X}
\frac{Nw(z)}{w(z)+t}q(z)\,dz
} \\
&=
\frac{
\dfrac{w(x)}{w(x)+t}
}{
\displaystyle
\int_{\mathcal X}
\frac{w(z)}{w(z)+t}q(z)\,dz
}
q(x).
\end{aligned}
\tag{12}
\]

\noindent Let \(X\sim q\), define
\[
W=w(X),
\]
and let \(p_W\) denote the probability law of \(W\) induced by \(X\sim q\).
By the law of the unconscious statistician,
\[
\begin{aligned}
\int_{\mathcal X}
\frac{w(z)}{w(z)+t}q(z)\,dz
&=
\mathbb E_{X\sim q}
\left[
\frac{w(X)}{w(X)+t}
\right] \\
&=
\mathbb E_{W\sim p_W}
\left[
\frac{W}{W+t}
\right] \\
&=
m_t.
\end{aligned}
\tag{13}
\]

\noindent Combining \textup{(12)} and \textup{(13)}, we obtain
\[
\widetilde r_{X_1\mid Y=x_{2:N}}(x)
=
\frac{\dfrac{w(x)}{w(x)+t}}{m_t}\,q(x)
=
h_t(w(x))q(x).
\]
Consequently, the likelihood ratio of the conditional law with respect to
\(q\) is
\[
\boxed{
\frac{
\widetilde r_{X_1\mid Y=x_{2:N}}(x)
}{
q(x)
}
=
h_t(w(x)).
}
\tag{14}
\]

\subsubsection{Proof of \texorpdfstring{\Cref{contraction_corollary}}{the corollary}}
For convenience, all expectations are taken with respect to $W\sim p_W$.
By construction,
\[
\E[h_t(W)]
=
\frac{1}{m_t}
\E\!\left[\frac{W}{W+t}\right]
=
1.
\]

\paragraph{Inverse Function Analysis.}
Write
\[
c=\frac{1}{m_t},
\qquad
a=c-t,
\qquad
h_t(w)=\frac{cw}{w+t}.
\]

\noindent The key step is to show that \(a>0\), that \(a\) is a fixed point of
\(h_t\), and that, for every \(w>0\), the transformed value \(h_t(w)\)
lies between \(w\) and \(a\). More precisely, we will prove that
\[
h_t(a)=a
\]
and
\[
\min\{w,a\}
\le
h_t(w)
\le
\max\{w,a\},
\qquad w>0.
\tag{15}
\]
\hrule
\begin{proof}

We first verify that \(a>0\). If \(t=0\), then
\[
m_0
=
\mathbb E\!\left[\frac{W}{W}\right]
=
1,
\]
so
\[
c=1,
\qquad
a=c-t=1>0.
\]

\noindent If \(t>0\), then, using \(\mathbb E[W]=1\),
\[
\begin{aligned}
1-tm_t
&=
\mathbb E[W]
-
t\mathbb E\!\left[\frac{W}{W+t}\right] \\
&=
\mathbb E\!\left[
W-\frac{tW}{W+t}
\right] \\
&=
\mathbb E\!\left[
\frac{W^2}{W+t}
\right]
>0.
\end{aligned}
\]
Therefore,
\[
a
=
c-t
=
\frac{1}{m_t}-t
=
\frac{1-tm_t}{m_t}
>0.
\]

\noindent Next, since \(a+t=c\),
\[
h_t(a)
=
\frac{ca}{a+t}
=
\frac{ca}{c}
=
a.
\]
Thus, \(a\) is a fixed point of \(h_t\). Finally,
\[
\begin{aligned}
h_t(w)-w
&=
\frac{cw}{w+t}-w \\
&=
\frac{w(c-w-t)}{w+t} \\
&=
\frac{w(a-w)}{w+t}.
\end{aligned}
\tag{16}
\]
Because \(w/(w+t)>0\), the sign of \(h_t(w)-w\) is the same as the sign
of \(a-w\). Hence,
\[
w\le a
\quad\Longrightarrow\quad
h_t(w)\ge w,
\]
whereas
\[
w\ge a
\quad\Longrightarrow\quad
h_t(w)\le w.
\]

Moreover, \(h_t\) is increasing because
\[
h_t'(w)
=
\frac{ct}{(w+t)^2}
\ge 0.
\]
Since \(h_t(a)=a\), monotonicity gives
\[
w\le a
\quad\Longrightarrow\quad
h_t(w)\le h_t(a)=a,
\]
and
\[
w\ge a
\quad\Longrightarrow\quad
h_t(w)\ge h_t(a)=a.
\]

\noindent Combining these facts proves \textup{(15)}. Thus,
\(h_t(w)\) always lies between \(w\) and the fixed point \(a\).
\end{proof}
\hrule

\vspace{10pt}
\noindent To obtain the final result, we apply Lemma~\ref{lemma_mean} with
\[
g(w)\triangleq h_t(w),
\qquad
f(w)\triangleq w\log w.
\]
The lemma then gives
\[
\mathbb{E}\!\left[
h_t(W)\log h_t(W)
\right]
\le
\mathbb{E}\!\left[
W\log W
\right],
\]
which concludes the proof.

\subsection{Bounding the Marginal Contribution}

We now prove that
\[
D_{\mathrm{KL}}(\widetilde r_Y\|q^{N-1})
\le
\log 2.
\]

\noindent The proof proceeds in three steps. First, we identify the
likelihood ratio of the marginal law \(\widetilde r_Y\) with respect to
\(q^{N-1}\). More precisely, we construct a nonnegative function \(B\)
such that
\[
\frac{\widetilde r_Y(Y)}
{q^{N-1}(Y)}
=
B(Y),
\]
which implies
\[
\mathbb E_{Y\sim q^{N-1}}[B(Y)]
=
1
\qquad \text{, equivalently} \qquad
\int_{\mathcal X^{N-1}}
B(y)q^{N-1}(y)\,dy
=
1.
\]

\noindent Second, we prove the second-moment bound
\[
\mathbb E_{Y\sim q^{N-1}}[B(Y)^2]
\le
\frac{2N}{N+1}.
\]

\noindent Once we identify a nonnegative function \(B\) satisfying the above
conditions, the marginal divergence satisfies
\[
\begin{aligned}
D_{\mathrm{KL}}(\widetilde r_Y\|q^{N-1})
&=
\int_{\mathcal X^{N-1}}
\widetilde r_Y(y)
\log\frac{\widetilde r_Y(y)}{q^{N-1}(y)}
\,dy
\\
&=
\int_{\mathcal X^{N-1}}
\log B(y)\,
\widetilde r_Y(y)\,dy
\\
&=
\mathbb{E}_{Y\sim \widetilde r_Y}[\log B(Y)].
\end{aligned}
\]
Since \(\widetilde r_Y\) is a probability density and \(\log\) is
concave, Jensen's inequality gives
\[
\begin{aligned}
D_{\mathrm{KL}}(\widetilde r_Y\|q^{N-1})
&\le
\log \mathbb{E}_{Y\sim \widetilde r_Y}[B(Y)]
\\
&=
\log\left(
\int_{\mathcal X^{N-1}}
B(y)\widetilde r_Y(y)\,dy
\right)
\\
&=
\log\left(
\int_{\mathcal X^{N-1}}
B(y)^2q^{N-1}(y)\,dy
\right)
\\
&=
\log
\mathbb E_{Y\sim q^{N-1}}[B(Y)^2].
\end{aligned}
\]
Finally, using the second-moment bound, we obtain
\[
D_{\mathrm{KL}}(\widetilde r_Y\|q^{N-1})
\le
\log\left(\frac{2N}{N+1}\right)
\le
\log 2.
\]

\noindent It therefore remains to identify \(B(Y)\) and establish the
second-moment bound.

\subsection*{Bounding the Second Moment}

We first carefully characterize the function \(B(y)\). Let
\[
Y=(X_2,\ldots,X_N),
\qquad
t(Y)
\defeq 
\sum_{i=2}^N w(X_i).
\]
The law that \(Y\) follows will be specified accordingly later. For a fixed
realization
\[
Y=x_{2:N},
\]
we write
\[
t(x_{2:N})
=
\sum_{i=2}^N w(x_i).
\]

\noindent To obtain the marginal density of \(Y\) under \(\widetilde r\),
we integrate out the first coordinate:
\[
\widetilde r_Y(x_{2:N})
=
\int_{\mathcal X}
\widetilde r(x,x_{2:N})\,dx.
\]
Since
\[
\frac{\widetilde r(x,x_{2:N})}
{r(x,x_{2:N})}
=
\frac{Nw(x)}
{w(x)+t(x_{2:N})},
\]
we have
\[
\widetilde r(x,x_{2:N})
=
\frac{Nw(x)}
{w(x)+t(x_{2:N})}
r(x,x_{2:N}).
\]
Therefore,
\[
\widetilde r_Y(x_{2:N})
=
\int_{\mathcal X}
\frac{Nw(x)}
{w(x)+t(x_{2:N})}
r(x,x_{2:N})\,dx.
\]

\noindent Because \(r=q^N\),
\[
r(x,x_{2:N})
=
q(x)\,q^{N-1}(x_{2:N})
=
q(x)\prod_{i=2}^N q(x_i).
\]
Substituting this factorization gives
\[
\begin{aligned}
\widetilde r_Y(x_{2:N})
&=
\int_{\mathcal X}
\frac{Nw(x)}
{w(x)+t(x_{2:N})}
q(x)\,q^{N-1}(x_{2:N})\,dx
\\
&=
\left[
N\int_{\mathcal X}
\frac{w(x)}
{w(x)+t(x_{2:N})}
q(x)\,dx
\right]
q^{N-1}(x_{2:N}),
\end{aligned}
\]
where \(q^{N-1}(x_{2:N})\) may be taken outside the integral because it
does not depend on \(x\).

\noindent Dividing by \(q^{N-1}(x_{2:N})\), we obtain
\[
\frac{\widetilde r_Y(x_{2:N})}
{q^{N-1}(x_{2:N})}
=
N\int_{\mathcal X}
\frac{w(x)}
{w(x)+t(x_{2:N})}
q(x)\,dx.
\]

\noindent We therefore define, for \(y\in\mathcal X^{N-1}\),
\begin{align}
B(y)
&\defeq 
N\int_{\mathcal X}
\frac{w(x)}
{w(x)+t(y)}
q(x)\,dx
\notag\\
&=
N\mathbb E_{X\sim q}
\left[
\frac{w(X)}
{w(X)+t(y)}
\right].
\tag{19}
\end{align}

\paragraph{Second Moment Bound.}

Recall that we want to prove
\[
\mathbb E_{Y\sim q^{N-1}}[B(Y)^2]
\le
\frac{2N}{N+1}.
\]

\noindent For a fixed \(y\in\mathcal X^{N-1}\), let
\[
U,V\overset{\mathrm{i.i.d.}}{\sim}q.
\]
By the definition of \(B(y)\),
\[
\begin{aligned}
B(y)^2
&=
N^2
\mathbb E_{U\sim q}
\left[
\frac{w(U)}{w(U)+t(y)}
\right]
\mathbb E_{V\sim q}
\left[
\frac{w(V)}{w(V)+t(y)}
\right]
\\
&=
N^2
\mathbb E_{U,V\overset{\mathrm{i.i.d.}}{\sim}q}
\left[
\frac{w(U)w(V)}
{\bigl(w(U)+t(y)\bigr)\bigl(w(V)+t(y)\bigr)}
\right],
\end{aligned}
\tag{20}
\]
where the second equality follows from the independence of \(U\) and
\(V\).

\noindent We now take expectation with respect to
\[
Y\sim q^{N-1},
\]
independently of \(U\) and \(V\). Thus, under the product law
\[
q^{N-1}\otimes q\otimes q,
\]
the random variables \(Y\), \(U\), and \(V\) are mutually independent.
Consequently,
\[
\begin{aligned}
\mathbb E_{Y\sim q^{N-1}}[B(Y)^2]
&=
N^2
\mathbb E_{\substack{
Y\sim q^{N-1}\\
U,V\overset{\mathrm{i.i.d.}}{\sim}q
}}
\left[
\frac{w(U)w(V)}
{\bigl(w(U)+t(Y)\bigr)\bigl(w(V)+t(Y)\bigr)}
\right].
\end{aligned}
\tag{21}
\]

\noindent Now, we note that, for any positive $\alpha,\beta,\gamma$, we have
\[
\frac{\alpha\beta}{(\alpha+\gamma)(\beta+\gamma)}
\le
\frac{\alpha\beta}{(\alpha+\beta+\gamma)(\alpha+\gamma)}
+
\frac{\alpha\beta}{(\alpha+\beta+\gamma)(\beta+\gamma)}.
\]

\begin{center}
\fbox{
\begin{minipage}{0.92\linewidth}
\textbf{Proof.}
For any \(\alpha,\beta,\gamma>0\),
\[
\begin{aligned}
&\frac{\alpha\beta}
{(\alpha+\beta+\gamma)(\alpha+\gamma)}
+
\frac{\alpha\beta}
{(\alpha+\beta+\gamma)(\beta+\gamma)}
\\
&\qquad=
\frac{\alpha\beta}{\alpha+\beta+\gamma}
\left(
\frac{1}{\alpha+\gamma}
+
\frac{1}{\beta+\gamma}
\right)
\\
&\qquad=
\frac{\alpha\beta(\alpha+\beta+2\gamma)}
{(\alpha+\beta+\gamma)(\alpha+\gamma)(\beta+\gamma)}.
\end{aligned}
\]
Therefore, the desired inequality is equivalent to
\[
1
\le
\frac{\alpha+\beta+2\gamma}
{\alpha+\beta+\gamma},
\]
which holds because \(\gamma>0\). Hence,
\[
\frac{\alpha\beta}
{(\alpha+\gamma)(\beta+\gamma)}
\le
\frac{\alpha\beta}
{(\alpha+\beta+\gamma)(\alpha+\gamma)}
+
\frac{\alpha\beta}
{(\alpha+\beta+\gamma)(\beta+\gamma)}.
\]
\end{minipage}
}
\end{center}

\noindent Applying the above inequality, we obtain
\begin{align*}
\mathbb E_{Y\sim q^{N-1}}[B(Y)^2]
&\le
N^2
\mathbb E_{\substack{
Y\sim q^{N-1}\\
U,V\overset{\mathrm{i.i.d.}}{\sim}q
}}
\Bigg[
\frac{w(U)w(V)}
{(w(U)+w(V)+t(Y))(w(U)+t(Y))}
\\
&\hspace{5.5cm}
+
\frac{w(U)w(V)}
{(w(U)+w(V)+t(Y))(w(V)+t(Y))}
\Bigg].
\end{align*}

\noindent For the first expectation term, we use
\[
\frac{w(U)w(V)}
{\bigl(w(U)+w(V)+t(Y)\bigr)\bigl(w(U)+t(Y)\bigr)}
=
\frac{w(U)}{w(U)+t(Y)}
-
\frac{w(U)}{w(U)+w(V)+t(Y)}.
\]
Hence,
\[
\begin{aligned}
&\mathbb E_{\substack{
Y\sim q^{N-1}\\
U,V\overset{\mathrm{i.i.d.}}{\sim}q
}}
\left[
\frac{w(U)w(V)}
{\bigl(w(U)+w(V)+t(Y)\bigr)\bigl(w(U)+t(Y)\bigr)}
\right]
\\
&\qquad=
\mathbb E_{\substack{
U,X_2,\ldots,X_N\overset{\mathrm{i.i.d.}}{\sim}q
}}
\left[
\frac{w(U)}
{w(U)+\sum_{i=2}^N w(X_i)}
\right]
\\
&\qquad\quad-
\mathbb E_{\substack{
U,V,X_2,\ldots,X_N\overset{\mathrm{i.i.d.}}{\sim}q
}}
\left[
\frac{w(U)}
{w(U)+w(V)+\sum_{i=2}^N w(X_i)}
\right].
\end{aligned}
\]

\noindent By exchangeability of the \(N\) i.i.d. random variables
\[
U,X_2,\ldots,X_N,
\]
we have
\[
\mathbb E_{\substack{
U,X_2,\ldots,X_N\overset{\mathrm{i.i.d.}}{\sim}q
}}
\left[
\frac{w(U)}
{w(U)+\sum_{i=2}^N w(X_i)}
\right]
=
\frac{1}{N}.
\]
Indeed, all \(N\) normalized weights have the same expectation and their
sum is equal to \(1\). Similarly, by exchangeability of the \(N+1\) i.i.d.
random variables
\[
U,V,X_2,\ldots,X_N,
\]
we have
\[
\mathbb E_{\substack{
U,V,X_2,\ldots,X_N\overset{\mathrm{i.i.d.}}{\sim}q
}}
\left[
\frac{w(U)}
{w(U)+w(V)+\sum_{i=2}^N w(X_i)}
\right]
=
\frac{1}{N+1}.
\]

\noindent Therefore,
\[
\begin{aligned}
&\mathbb E_{\substack{
Y\sim q^{N-1}\\
U,V\overset{\mathrm{i.i.d.}}{\sim}q
}}
\left[
\frac{w(U)w(V)}
{\bigl(w(U)+w(V)+t(Y)\bigr)\bigl(w(U)+t(Y)\bigr)}
\right]
\\
&\qquad=
\frac{1}{N}-\frac{1}{N+1}
=
\frac{1}{N(N+1)}.
\end{aligned}
\]

\noindent By symmetry between \(U\) and \(V\), the second expectation term is also
equal to
\[
\frac{1}{N(N+1)}.
\]
Consequently,
\[
\mathbb E_{Y\sim q^{N-1}}[B(Y)^2]
\le
N^2\left(
\frac{1}{N(N+1)}
+
\frac{1}{N(N+1)}
\right)
=
\frac{2N}{N+1}
\le 2,
\]
which concludes the result.

\section{Proof of Theorem \texorpdfstring{\ref{prop:2nd_result}}{prop:2nd\_result}}\label{thm:2}

\subsection{\texorpdfstring{
Proof of \texorpdfstring{$D_{\mathrm{KL}}(p_N\|q) \le \mathbb{E}[\widehat D_N]$}{DKL(pN||q) <= E[DN]} 
}{
Proof of D_KL(p_N||q) <= E[D_N]
}}\label{app:a}
Recall that the marginal distribution $p_N$ of the resampled particle
$X_I$ has the explicit form
\[
p_N(x)
=
Nq(x)\,
\mathbb{E}_{X_{2:N}\sim q^{\otimes(N-1)}}
\left[
\frac{w(x)}
{w(x)+\sum_{j=2}^N w(X_j)}
\right].
\]
Consider the joint distribution of the candidate pool and selected index,
\[
p^\star(x_{1:N},i)
=
\left(
\prod_{j=1}^N q(x_j)
\right)
\lambda_i(x_{1:N}),
\]
and the reference joint distribution
\[
q^\star(x_{1:N},i)
=
\left(
\prod_{j=1}^N q(x_j)
\right)
\frac{1}{N}.
\]
Under both distributions, the candidate pool
$X_{1:N}$ is drawn i.i.d.\ from $q$; the only difference is that under
$p^\star$ the index $I$ is sampled according to the normalized importance
weights, whereas under $q^\star$ it is sampled uniformly. Their KL divergence is
\[
\begin{aligned}
D_{\mathrm{KL}}(p^\star\|q^\star)
&=
\sum_{i=1}^N
\int
p^\star(x_{1:N},i)
\log
\frac{p^\star(x_{1:N},i)}
{q^\star(x_{1:N},i)}
\,dx_{1:N}
\\
&=
\sum_{i=1}^N
\int
\left(
\prod_{j=1}^N q(x_j)
\right)
\lambda_i(x_{1:N})
\log\bigl(N\lambda_i(x_{1:N})\bigr)
\,dx_{1:N}
\\
&=
\mathbb{E}_{X_{1:N}\sim q^{\otimes N}}
\left[
\sum_{i=1}^N
\lambda_i\log(N\lambda_i)
\right]
\\
&=
\mathbb{E}[\widehat D_N].
\end{aligned}
\]

\noindent Now apply the mapping
\[
(X_{1:N},I)\mapsto X_I.
\]
Under $p^\star$, the output $X_I$ has marginal distribution $p_N$.
Under $q^\star$, since $I$ is uniform and independent of the i.i.d.\
candidate pool, we have $X_I\sim q$. Therefore, by the data-processing inequality \citep{polyanskiy2025information},
\[
D_{\mathrm{KL}}(p_N\|q)
\le
D_{\mathrm{KL}}(p^\star\|q^\star)
=
\mathbb{E}[\widehat D_N],
\]
which completes the proof.

\subsection{\texorpdfstring{
Proof of $D_{\mathrm{KL}}(p_N\|q) \le D_{\mathrm{KL}}(p\|q)$
}{
Proof of D_KL(p_N||q) <= D_KL(p||q)
}}
Let $X\sim q$ and write
\[
W\defeq w(X),
\qquad
\mathbb{E}_q[W]=1.
\]
For $N=1$, we have $p_1=q$, and hence
\[
D_{\mathrm{KL}}(p_1\|q)=0
\le
D_{\mathrm{KL}}(p\|q).
\]
We therefore assume $N\ge2$ for the remainder of the proof. By exchangeability of $X_1,\ldots,X_N$, the density (or probability mass
function) of the resampled particle is
\[
\begin{aligned}
p_N(x)
&=
Nq(x)\,
\mathbb{E}_{X_{2:N}\sim q^{\otimes(N-1)}}
\left[
\frac{w(x)}
{w(x)+\sum_{j=2}^N w(X_j)}
\right]
\\
&=
Nq(x)\,
\mathbb{E}_{X_{2:N}\sim q^{\otimes(N-1)}}
\left[
\frac{\frac{p(x)}{q(x)}}
{\frac{p(x)}{q(x)}
+\sum_{j=2}^N\frac{p(X_j)}{q(X_j)}}
\right].
\end{aligned}
\]
Equivalently,
\[
\frac{p_N(x)}{q(x)}
=
h_N(w(x)),
\]
where
\[
h_N(t)
\defeq 
N\,
\mathbb{E}
\left[
\frac{t}
{t+\sum_{j=2}^N W_j}
\right],
\qquad
W_j\defeq w(X_j).
\]
Since $p_N$ and $p$ are probability distributions,
\[
\mathbb{E}_q[h_N(W)]
=
\mathbb{E}_q[W]
=
1.
\]

\noindent We will show that there exists $c>0$ such that
\begin{equation}
\min\{t,c\}
\le
h_N(t)
\le
\max\{t,c\},
\qquad t>0.
\label{sandwich_hN}
\end{equation}
Applying Lemma~\ref{lemma_mean} with
\[
T=W,
\qquad
g(t)=h_N(t),
\qquad
f(t)=t\log t,
\]
then gives
\[
\mathbb{E}_q
\left[
h_N(W)\log h_N(W)
\right]
\le
\mathbb{E}_q
\left[
W\log W
\right].
\]
Since
\[
\frac{p_N(X)}{q(X)}=h_N(W),
\]
the two sides are precisely
\[
D_{\mathrm{KL}}(p_N\|q)
\qquad\text{and}\qquad
D_{\mathrm{KL}}(p\|q),
\]
respectively. It therefore remains only to prove~\eqref{sandwich_hN}. First, observe that $h_N$ is strictly increasing. Indeed,
differentiating under the expectation gives
\[
h_N'(t)
=
N\,
\mathbb{E}
\left[
\frac{
\sum_{j=2}^N W_j
}{
\left(
t+\sum_{j=2}^N W_j
\right)^2
}
\right]
>0,
\]
since $W_j>0$ almost surely. We next establish the following two claims:
\begin{enumerate}
    \item There exists $c>0$ such that
    \[
    h_N(c)=c.
    \]
    \item We have
    \[
    h_N(t)\ge t
    \qquad\text{for }t\le c,
    \]
    and
    \[
    h_N(t)\le t
    \qquad\text{for }t\ge c.
    \]
\end{enumerate}
Once these claims are established, the strict monotonicity of $h_N$ and
$h_N(c)=c$ imply
\[
t\le h_N(t)\le c,
\qquad t\le c,
\]
and
\[
c\le h_N(t)\le t,
\qquad t\ge c.
\]
This is exactly~\eqref{sandwich_hN}.

\paragraph{Proving the claims.}
We first note that $h_N$ is continuous on $(0,\infty)$. Indeed, let
$t_k\to t>0$. Since
\[
\frac{t_k}{t_k+\sum_{j=2}^N W_j}
\longrightarrow
\frac{t}{t+\sum_{j=2}^N W_j}
\]
and the integrand is bounded between $0$ and $1$, the dominated
convergence theorem gives
\[
h_N(t_k)\longrightarrow h_N(t).
\]
Consequently, $h_N(t)/t$ is also continuous on $(0,\infty)$.

Now observe that
\[
\frac{h_N(t)}{t}
=
N\,
\mathbb{E}
\left[
\frac{1}
{t+\sum_{j=2}^N W_j}
\right].
\]
For every realization of $W_2,\ldots,W_N$, the function
\[
t\mapsto
\frac{1}
{t+\sum_{j=2}^N W_j}
\]
is strictly decreasing. Taking expectations therefore shows that
$h_N(t)/t$ is strictly decreasing on $(0,\infty)$.

Moreover,
\[
h_N(t)-t
=
t\left(
\frac{h_N(t)}{t}-1
\right).
\]
Since $t>0$, the sign of $h_N(t)-t$ is determined by whether
$h_N(t)/t$ is above or below $1$. Thus, $h_N(t)-t$ can cross zero at
most once, and any crossing must be from positive to negative. On the other hand,
\[
\mathbb{E}_q[h_N(W)-W]
=
\mathbb{E}_q[h_N(W)]
-
\mathbb{E}_q[W]
=
0.
\]
If $h_N(W)=W$ almost surely, then any value in the support of $W$ is a
fixed point and the claims follow immediately. Otherwise,
$h_N(W)-W$ cannot be strictly positive almost surely or strictly negative
almost surely, since its expectation is zero. Hence there exist values
$t_1,t_2$ in the support of $W$ such that
\[
h_N(t_1)>t_1,
\qquad
h_N(t_2)<t_2.
\]
Since $h_N(t)/t$ is strictly decreasing, we must have $t_1<t_2$.
By continuity, there exists $c\in(t_1,t_2)$ such that
\[
h_N(c)=c.
\]

\noindent Finally, since $h_N(t)/t$ is decreasing and
\[
\frac{h_N(c)}{c}=1,
\]
we have
\[
\frac{h_N(t)}{t}\ge1
\qquad\text{for }t\le c,
\]
and
\[
\frac{h_N(t)}{t}\le1
\qquad\text{for }t\ge c.
\]
Equivalently,
\[
h_N(t)\ge t
\qquad\text{for }t\le c,
\]
and
\[
h_N(t)\le t
\qquad\text{for }t\ge c.
\]
This proves the two claims and completes the proof.

\section{\texorpdfstring{
Proof of  Theorem \ref{prop:snis-lower}
}{
Proof of Theorem~\ref{prop:snis-lower}
}}\label{thm3:proof}

Recall that, for \(y\) in the support of \(q\),
\[
\frac{p_N(y)}{q(y)}
=
N\,
\mathbb{E}_{X_{2:N}\sim q^{\otimes(N-1)}}
\left[
\frac{w(y)}
{w(y)+\sum_{j=2}^N w(X_j)}
\right].
\]
For \(M\ge1\), define
\[
\widehat \theta_M(y)
=
\frac{N}{M}
\sum_{m=1}^M
\frac{w(y)}
{w(y)+\sum_{j=2}^N w(X_j^{(m)})},
\qquad
X_j^{(m)}\overset{\mathrm{i.i.d.}}{\sim}q,
\]
where the auxiliary samples \(\{X_j^{(m)}\}_{j,m}\) are independent across
\(j,m\) and independent of \(Y\sim p_N\). For every fixed \(y\),
\[
\begin{aligned}
\mathbb{E}_{\mathrm{aux}}
\left[
\widehat \theta_M(y)
\right]
&=
\frac{N}{M}
\sum_{m=1}^M
\mathbb{E}_{\mathrm{aux}}
\left[
\frac{w(y)}
{w(y)+\sum_{j=2}^N w(X_j^{(m)})}
\right]
\\
&=
N\,
\mathbb{E}_{X_{2:N}\sim q^{\otimes(N-1)}}
\left[
\frac{w(y)}
{w(y)+\sum_{j=2}^N w(X_j)}
\right]
\\
&=
\frac{p_N(y)}{q(y)}.
\end{aligned}
\]
Thus, \(\widehat \theta_M(y)\) is an unbiased estimator of the density ratio
\(p_N(y)/q(y)\). Since the logarithm is concave, Jensen's inequality gives,
for every fixed \(y\),
\[
\mathbb{E}_{\mathrm{aux}}
\left[
\log \widehat \theta_M(y)
\right]
\le
\log
\mathbb{E}_{\mathrm{aux}}
\left[
\widehat \theta_M(y)
\right]
=
\log\frac{p_N(y)}{q(y)}.
\]
Taking expectation with respect to \(Y\sim p_N\) yields
\[
\begin{aligned}
\mathbb{E}_{Y\sim p_N}
\left[
\mathbb{E}_{\mathrm{aux}}
\left[
\log \widehat \theta_M(Y)
\right]
\right]
&\le
\mathbb{E}_{Y\sim p_N}
\left[
\log\frac{p_N(Y)}{q(Y)}
\right]
\\
&=
D_{\mathrm{KL}}(p_N\|q).
\end{aligned}
\]
Therefore,
\[
\mathbb{E}_{Y\sim p_N}
\left[
\mathbb{E}_{\mathrm{aux}}
\left[
\log \widehat \theta_M(Y)
\right]
\right]
\le
D_{\mathrm{KL}}(p_N\|q).
\]

\paragraph{Convergence.}
For every fixed $y$, the strong law of large numbers gives
\[
\widehat \theta_M(y)
\longrightarrow
\frac{p_N(y)}{q(y)}
\qquad\text{almost surely as }M\to\infty,
\]
and hence, by continuity of the logarithm,
\[
\log \widehat \theta_M(y)
\longrightarrow
\log\frac{p_N(y)}{q(y)}
\qquad\text{almost surely}.
\]
Therefore, for every fixed $B$,
\[
\widehat L_{B,M}
\longrightarrow
\max\left\{
0,
\frac{1}{B}\sum_{b=1}^B
\log\frac{p_N(Y_b)}{q(Y_b)}
\right\}
\qquad\text{almost surely as }M\to\infty.
\]

\noindent Moreover, since $p_N(y)/q(y)\le N$, the positive part satisfies
\[
\mathbb{E}_{p_N}\!\left[
\left(\log\frac{p_N(Y)}{q(Y)}\right)_+
\right]
\le \log N.
\]
For the negative part, writing
\[
r(y)\defeq \frac{p_N(y)}{q(y)}
\]
and changing measure from $p_N$ to $q$ gives
\[
\begin{aligned}
\mathbb{E}_{p_N}\!\left[(-\log r(Y))_+\right]
&=
\mathbb{E}_q\!\left[
r(Y)(-\log r(Y))\mathbf{1}\{r(Y)<1\}
\right]
\\
&\le \frac{1}{e},
\end{aligned}
\]
where we used $-t\log t\le 1/e$ for $t\in(0,1)$. Hence,
\[
\mathbb{E}_{p_N}\!\left[
\left|\log\frac{p_N(Y)}{q(Y)}\right|
\right]
\le \log N+\frac{1}{e}<\infty.
\]
The strong law of large numbers therefore yields
\[
\frac{1}{B}\sum_{b=1}^B
\log\frac{p_N(Y_b)}{q(Y_b)}
\longrightarrow
D_{\mathrm{KL}}(p_N\|q)
\qquad\text{almost surely as }B\to\infty.
\]
Since $D_{\mathrm{KL}}(p_N\|q)\ge 0$ and
$x\mapsto\max\{0,x\}$ is continuous,
\[
\max\left\{
0,
\frac{1}{B}\sum_{b=1}^B
\log\frac{p_N(Y_b)}{q(Y_b)}
\right\}
\longrightarrow
D_{\mathrm{KL}}(p_N\|q)
\qquad\text{almost surely}.
\]
Therefore,
\[
\lim_{B\to\infty}\lim_{M\to\infty}\widehat L_{B,M}
=
D_{\mathrm{KL}}(p_N\|q)
\qquad\text{almost surely}.
\]
\section{Proof for Soft Best-of-N Optimality}\label{sBoN_tilted_policy}
For completeness, we verify that the solution to the Soft BoN optimization
problem
\[
\max_{\pi}
\;
\mathbb{E}_{Y\sim\pi(\cdot\mid x)}
\left[
r(x,Y)
\right]
\qquad
\text{subject to}
\qquad
D_{\mathrm{KL}}
\left(
\pi(\cdot\mid x)
\middle\|
\pi_0(\cdot\mid x)
\right)
\le \varepsilon
\]
is given by an exponential tilt of the base policy
\citep{verdun2025soft}. In particular, for an appropriate inverse
temperature $\beta>0$, the optimal policy takes the form
\[
\pi_\beta(y\mid x)
=
\frac{
\pi_0(y\mid x)e^{\beta r(x,y)}
}{
Z_\beta(x)
},
\qquad
Z_\beta(x)
=
\mathbb{E}_{Y\sim\pi_0(\cdot\mid x)}
\left[
e^{\beta r(x,Y)}
\right].
\]
\begin{proof}
    The objective is linear, and hence concave, in the policy $\pi$.
To see this directly, let $\pi_1$ and $\pi_2$ be two policies and let
$\theta\in[0,1]$. Define the mixture policy
\[
\pi_{\mathrm{mixture}}(y;\theta)
=
\theta\pi_1(y)+(1-\theta)\pi_2(y).
\]
Then
\[
\begin{aligned}
\mathbb{E}_{Y\sim\pi_{\mathrm{mixture}}(y;\theta)}[r(Y)]
&=
\int r(y)\,\pi_{\mathrm{mixture}}(y;\theta)\,dy\\
&=
\int r(y)
\left[
\theta\pi_1(y)+(1-\theta)\pi_2(y)
\right]dy\\
&=
\theta
\int r(y)\pi_1(y)\,dy
+
(1-\theta)
\int r(y)\pi_2(y)\,dy\\
&=
\theta
\mathbb{E}_{Y\sim\pi_1}[r(Y)]
+
(1-\theta)
\mathbb{E}_{Y\sim\pi_2}[r(Y)].
\end{aligned}
\]
Thus the concavity inequality holds with equality, so the expected reward
is linear (and therefore both concave and convex) in $\pi$. Moreover, $D_{\mathrm{KL}}(\pi\|\pi_0)$ is convex in $\pi$ for fixed
$\pi_0$, so the constraint
\[
D_{\mathrm{KL}}(\pi\|\pi_0)\le\varepsilon
\]
defines a convex feasible set. Hence maximizing the expected reward over
this set is a convex optimization problem. Suppose the KL constraint is active, with multiplier $\eta>0$. The Lagrangian is
\[
\mathcal L(\pi,\eta,\alpha)
=
\int r(y)\pi(y)\,dy
-\eta
\left(
\int \pi(y)\log\frac{\pi(y)}{\pi_0(y)}\,dy-\varepsilon
\right)
+\alpha\left(\int\pi(y)\,dy-1\right).
\]

\noindent Stationarity with respect to $\pi(y)$ yields
\[
r(y)
-\eta\left(
\log\frac{\pi(y)}{\pi_0(y)}+1
\right)
+\alpha=0,
\]
and therefore
\[
\pi(y)\propto \pi_0(y)e^{r(y)/\eta}.
\]
Writing $\beta=1/\eta$ gives
\[
\pi_\beta(y)
=
\frac{\pi_0(y)e^{\beta r(y)}}
{\mathbb{E}_{Y\sim\pi_0}[e^{\beta r(Y)}]}.
\]
The value of $\beta$ is chosen such that
$D_{\mathrm{KL}}(\pi_\beta\|\pi_0)=\varepsilon$.
\end{proof}

\section{Consistency of Soft Best-of-\texorpdfstring{$N$}{N}}
\label{sBoN consistency}

Fix an input $x$ and assume
\[
0<Z_\beta(x)
=
\mathbb{E}_{Y\sim\pi_0(\cdot\mid x)}
\left[
e^{\beta r(x,Y)}
\right]
<\infty.
\]
Let
\[
Y_1,\ldots,Y_N
\overset{\mathrm{i.i.d.}}{\sim}
\pi_0(\cdot\mid x),
\]
and recall that, conditional on the candidate set $Y_{1:N}$, SBoN samples
\[
I\mid Y_{1:N}
\sim
\operatorname{Cat}(\lambda_1,\ldots,\lambda_N),
\]
where
\[
\begin{aligned}
\lambda_i
=
\frac{
\pi_\beta(Y_i\mid x)/\pi_0(Y_i\mid x)
}{
\sum_{j=1}^N
\pi_\beta(Y_j\mid x)/\pi_0(Y_j\mid x)
}=
\frac{
e^{\beta r(x,Y_i)}
}{
\sum_{j=1}^N e^{\beta r(x,Y_j)}
},
\qquad i\in[N].
\end{aligned}
\]

\noindent For any fixed output $y$, the conditional probability that the returned
response equals $y$ is
\[
\begin{aligned}
\mathbb{P}
\left(
Y_I=y
\mid
Y_{1:N}
\right)
&=
\sum_{i=1}^N
\mathbb{P}
\left(
I=i
\mid
Y_{1:N}
\right)
\mathbf{1}\{Y_i=y\}
\\
&=
\sum_{i=1}^N
\lambda_i
\mathbf{1}\{Y_i=y\}.
\end{aligned}
\]
Taking expectation over
\(
Y_{1:N}\sim\pi_0(\cdot\mid x)^{\otimes N}
\)
therefore gives
\[
\pi_{N,\beta}(y\mid x)
=
\mathbb{E}_{
Y_{1:N}\sim\pi_0(\cdot\mid x)^{\otimes N}
}
\left[
\sum_{i=1}^N
\lambda_i
\mathbf{1}\{Y_i=y\}
\right].
\]

\noindent Define the likelihood ratio
\[
w(y)
=
\frac{
\pi_\beta(y\mid x)
}{
\pi_0(y\mid x)
}
=
\frac{
e^{\beta r(x,y)}
}{
Z_\beta(x)
}.
\]
Since the common normalizing constant $Z_\beta(x)$ cancels in the
self-normalized weights,
\[
\lambda_i
=
\frac{
w(Y_i)
}{
\sum_{j=1}^N w(Y_j)
}.
\]
Hence,
\[
\pi_{N,\beta}(y\mid x)
=
\mathbb{E}_{
Y_{1:N}\sim\pi_0(\cdot\mid x)^{\otimes N}
}
\left[
\frac{
\sum_{i=1}^N
w(Y_i)\mathbf{1}\{Y_i=y\}
}{
\sum_{j=1}^N
w(Y_j)
}
\right].
\]
Equivalently,
\[
\pi_{N,\beta}(y\mid x)
=
\mathbb{E}_{
Y_{1:N}\sim\pi_0(\cdot\mid x)^{\otimes N}
}
\left[
\frac{
\frac{1}{N}\sum_{i=1}^N
w(Y_i)\mathbf{1}\{Y_i=y\}
}{
\frac{1}{N}\sum_{j=1}^N
w(Y_j)
}
\right].
\]

\noindent Because $w$ is the likelihood ratio,
\[
\mathbb{E}_{Y\sim\pi_0(\cdot\mid x)}
\left[
w(Y)
\right]
=
1.
\]
Therefore, by the strong law of large numbers,
\[
\frac{1}{N}
\sum_{j=1}^N
w(Y_j)
\longrightarrow
1
\qquad\text{a.s.}
\]
Moreover, for every fixed $y$,
\[
\begin{aligned}
\mathbb{E}_{Y\sim\pi_0(\cdot\mid x)}
\left[
w(Y)\mathbf{1}\{Y=y\}
\right]
&=
\pi_0(y\mid x)w(y)
\\
&=
\pi_\beta(y\mid x),
\end{aligned}
\]
so again by the strong law,
\[
\frac{1}{N}
\sum_{i=1}^N
w(Y_i)\mathbf{1}\{Y_i=y\}
\longrightarrow
\pi_\beta(y\mid x)
\qquad\text{a.s.}
\]
Hence,
\[
\frac{
\frac{1}{N}\sum_{i=1}^N
w(Y_i)\mathbf{1}\{Y_i=y\}
}{
\frac{1}{N}\sum_{j=1}^N
w(Y_j)
}
\longrightarrow
\pi_\beta(y\mid x)
\qquad\text{a.s.}
\]

\noindent The ratio inside the expectation is always between $0$ and $1$.
Therefore, by the dominated convergence theorem,
\[
\pi_{N,\beta}(y\mid x)
\longrightarrow
\pi_\beta(y\mid x)
\qquad
\text{for every }y.
\]
Thus, on the countable output space, we have
\(
\pi_{N,\beta}(\cdot\mid x)
\longrightarrow
\pi_\beta(\cdot\mid x)
\text{ as }N\to\infty.
\)
\section{Monotonicity of the Population KL in $\beta$}
\label{sec:beta-kl-monotonicity}

We show that the KL divergence of the Gibbs policy from the base policy is
nondecreasing in $\beta$. Fix $x$ and define
\[
Z_\beta(x)
\defeq 
\E_{Y\sim\pi_0(\cdot\mid x)}
\left[e^{\beta r(x,Y)}\right].
\]
Recall that
\[
\pi_\beta(y\mid x)
=
\frac{\pi_0(y\mid x)e^{\beta r(x,y)}}{Z_\beta(x)}.
\]
Assume that
\[
Z_\beta(x)
=
\E_{Y\sim\pi_0(\cdot\mid x)}
\left[e^{\beta r(x,Y)}\right]
<\infty
\]
for every $\beta\in\mathbb{R}$, then we have:
\[
\frac{\partial}{\partial\beta}\log Z_\beta(x)
=
\E_{Y\sim\pi_\beta(\cdot\mid x)}[r(x,Y)],
\]
and hence
\[
\frac{\partial^2}{\partial\beta^2}\log Z_\beta(x)
=
\mathrm{Var}_{Y\sim\pi_\beta(\cdot\mid x)}[r(x,Y)].
\]
Moreover,
\[
D_{\mathrm{KL}}\!\left(
\pi_\beta(\cdot\mid x)\|
\pi_0(\cdot\mid x)
\right)
=
\beta\E_{Y\sim\pi_\beta(\cdot\mid x)}[r(x,Y)]
-\log Z_\beta(x).
\]
Differentiating gives
\[
\frac{\partial}{\partial\beta}
D_{\mathrm{KL}}\!\left(
\pi_\beta(\cdot\mid x)\|
\pi_0(\cdot\mid x)
\right)
=
\beta\,
\mathrm{Var}_{Y\sim\pi_\beta(\cdot\mid x)}[r(x,Y)]
\geq 0.
\]
Thus, for $\beta\geq0$, the population KL divergence from the base policy is
nondecreasing in $\beta$.
\section{Proof of Corollary \ref{cor:sbon}}\label{proof_cor:sobn}
Set
\[
p=\pi_\beta(\cdot\mid x),
\qquad
q=\pi_0(\cdot\mid x).
\]
The corresponding likelihood ratio is
\[
w(y)
=
\frac{p(y)}{q(y)}
=
\frac{e^{\beta r(x,y)}}{Z_\beta(x)}.
\]
Therefore,
\[
\frac{w(Y_i)}
{\sum_{j=1}^N w(Y_j)}
=
\frac{e^{\beta r(x,Y_i)}}
{\sum_{j=1}^N e^{\beta r(x,Y_j)}}
=
\lambda_i.
\]
Hence SBoN is precisely the sampling-importance-resampling procedure of
our main results, with
\[
P_N=\pi_{N,\beta}(\cdot\mid x).
\]
The first two claims therefore follow immediately from Proposition \ref{prop:kl_estimator} and \ref{prop:2nd_result}. For the bounded-reward statement, for every $i$,
\[
\lambda_i
=
\frac{1}
{1+\sum_{j\ne i}
e^{\beta(r(x,Y_j)-r(x,Y_i))}}
\le
\frac{1}
{1+(N-1)e^{-\beta R_{\max}}},
\]
since
\[
r(x,Y_j)-r(x,Y_i)\ge -R_{\max}.
\]
It follows that
\[
\begin{aligned}
\widehat D_N
&=
\sum_{i=1}^N
\lambda_i\log(N\lambda_i)
\\
&\le
\sum_{i=1}^N
\lambda_i
\log
\frac{N}
{1+(N-1)e^{-\beta R_{\max}}}
\\
&=
\log
\frac{N}
{1+(N-1)e^{-\beta R_{\max}}},
\end{aligned}
\]
which proves the final claim.

\section{Experimental Validation}\label{sec:exp}

We first focus on the additive \(\log 2\) slack in \Cref{prop:kl_estimator}. 
Specifically, we investigate how large the finite-sample gap
\[
\Delta^*=\sup_N \{\mathbb{E}[\widehat D_N]-D_{\mathrm{KL}}(p\|q)\}
\]
can become across a range of distributions and sample sizes. Our experiments confirm that this gap can indeed be positive, while suggesting that the theoretical \(\log 2\approx 0.69\) constant is conservative: across all settings considered, we do not observe a gap larger than \(0.15\). This motivates the possibility that the universal additive constant in \Cref{prop:kl_estimator} can be potentially improved. We then turn to ~\Cref{prop:2nd_result} and empirically validate its bound across a range of distributions and sample sizes. All Monte Carlo estimates of $\mathbb{E}[\widehat D_N]$ are computed using $50{,}000$ independent replicates.

\subsection{Empirical Study of the Additive Slack}\label{sec:exp_slack}
\input{plots/bernouli}

\paragraph{Bernoulli distributions.}
Our first experiment considers a simple Bernoulli setting on $\{0,1\}$, with
\[
p(0)=1-\epsilon_1,
\qquad
p(1)=\epsilon_1,
\qquad
q(0)=\epsilon_2,
\qquad
q(1)=1-\epsilon_2,
\]
where $\epsilon_1,\epsilon_2\in(0,1/2)$. The corresponding importance weights are
\[
w(0)=\frac{1-\epsilon_1}{\epsilon_2},
\qquad
w(1)=\frac{\epsilon_1}{1-\epsilon_2}.
\]
In this setting, the expectation of the SNIS KL estimator can be evaluated
exactly. In particular, if
$M\sim\operatorname{Binomial}(N,\epsilon_2)$ denotes the number of zeros
among the $N$ samples drawn from $q$, then, conditional on $M=m$, the total
unnormalized importance weight is
\[
S_m
=
m w(0)+(N-m)w(1),
\]
and hence
\[
\mathbb{E}[\widehat D_N]
=
\sum_{m=0}^N
\binom{N}{m}
\epsilon_2^m(1-\epsilon_2)^{N-m}
\left[
\frac{m w(0)}{S_m}
\log\!\left(\frac{Nw(0)}{S_m}\right)
+
\frac{(N-m)w(1)}{S_m}
\log\!\left(\frac{Nw(1)}{S_m}\right)
\right].
\]
We evaluate the maximum finite-sample slack
\(
\Delta^*(\epsilon_1,\epsilon_2)
:=
\max_N
\left\{
\mathbb{E}[\widehat D_N]
-
D_{\mathrm{KL}}(p\|q)
\right\}
\)
over a logarithmically spaced grid of $(\epsilon_1,\epsilon_2)$, where the
maximum is taken over the considered sample sizes $N$. \Cref{fig:bernoulli-slack} shows that
positive slack occurs when $p$ is highly concentrated on state $0$, while
that state remains relatively unlikely under $q$. Across this Bernoulli family, the largest observed slack is approximately $0.0844$, substantially below the universal $\log 2$ bound. The maximum occurs at the boundary of the considered range of $\epsilon_1$; further decreasing $\epsilon_1$ increases the slack, but the observed values appear to converge to a finite limit.

\paragraph{Smooth long-tail Weibull.}

\input{plots/weibull}
\begin{figure}[t]
    \centering
    \includegraphics[width=\textwidth]{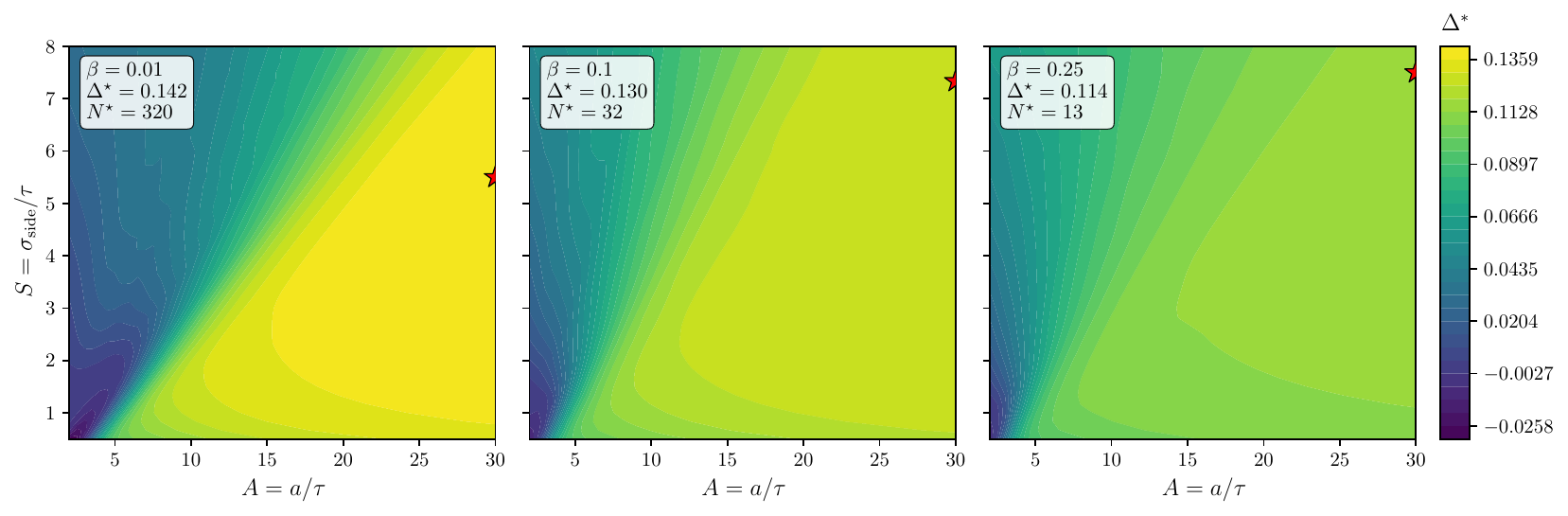}
    \caption{
        Profiled SNIS slack in the rare-bump setting.
        For each $(A,S)$ and fixed $\beta$, we plot
        \(
        \Delta^*(A,S;\beta)
        :=
        \max_{N}
        \left\{
        \mathbb{E}[\widehat D_N]
        -
        D_{\mathrm{KL}}(p\Vert q)
        \right\}.
        \)
        The red star marks the maximizing geometry for each value of $\beta$. Further decreasing $\beta$ below $0.01$ does not substantially increase the observed slack, suggesting that the maximum slack approaches a limiting value as $\beta\to 0$.
    }
    \label{fig:rare-bump-contours}
\end{figure}

We next consider whether differences in tail behavior between the target and proposal distributions contribute to positive finite-sample slack. To investigate this, we consider a two-parameter family of Weibull target densities.
\(
    p_{\mu,\nu} = \operatorname{Wei}(\mu,\nu),
\)
with
\[
    p_{\mu,\nu}(x)
    =
    \frac{\mu}{\nu}
    \left(\frac{x}{\nu}\right)^{\mu-1}
    \exp\left\{
        -\left(\frac{x}{\nu}\right)^\mu
    \right\},
    \qquad x\geq 0,
\]
and fix the proposal density to be
\(
    q=\operatorname{Wei}(1,1)=\operatorname{Exp}(1).
\)
We vary the shape parameter $\mu$ and scale parameter $\nu$ over a
two-dimensional grid.  In particular, $\mu<1$ corresponds to targets with
heavier tails than the exponential proposal.  The population KL divergence is available in closed form \citep{bauckhage2013computing}:
\[
\begin{aligned}
    D_{\mathrm{KL}}(p\Vert q)
    &=
    \log \mu-\log\nu
    +\gamma_{\mathrm E}
        \left(\frac{1}{\mu}-1\right) 
    +\nu
        \Gamma\!\left(1+\frac{1}{\mu}\right)
    -1,
\end{aligned}
\]
where $\Gamma(\cdot)$ denotes the Gamma function and $\gamma_{\mathrm E}$ is the Euler--Mascheroni constant.
This allows us to directly measure the finite-sample gap $\Delta^*(\mu,\nu)$. The results in \Cref{weibulla} show that the maximum slack remains close to zero across the considered parameter range, with sample sizes up to $N=2^{15}$. Moreover, the maximizing $N$ consistently occurs at the largest sample size considered, suggesting that the slack approaches zero from below as $N$ increases. This behavior is further illustrated in \Cref{weibullb}, where the slack remains negative across the considered sample sizes and increases toward zero. These results empirically suggest that heavier-tailed behavior of the target distribution alone does not lead to a  positive overshoot in the finite-sample slack.

\paragraph{Rare-bump setting.}
Let
\[
p=\mathcal N(0,\tau^2),\qquad
q=\beta\,\mathcal N(0,\tau^2)
+\frac{1-\beta}{2}\mathcal N(-a,\sigma^2)
+\frac{1-\beta}{2}\mathcal N(a,\sigma^2).
\]
The proposal places a small mass $\beta$ on a component matching the target, with the remaining mass split between two symmetric side modes. Since the problem is invariant to a common rescaling of $p$ and $q$, we set $\tau=1$ without loss of generality and parameterize the geometry by
\[
A=\frac{a}{\tau},\qquad S=\frac{\sigma}{\tau}.
\]
Thus the setting is completely specified by $(A,S,\beta)$, together with the number of importance samples $N$. 
As shown in \Cref{fig:rare-bump-contours}, the rare-bump setting produces substantially larger positive slack than the Bernoulli and Weibull examples. The largest observed slack increases as $\beta$ decreases, reaching approximately $0.142$ for $\beta=0.01$, attained at $N=320$. The maximizing geometry also lies near the boundary of the considered $(A,S)$ range: increasing $A$ and $S$ further continues to increase the slack, although the observed values appear to approach a limiting value. Similarly, further decreasing $\beta$ below $0.01$ does not substantially increase the observed slack, suggesting convergence to a limiting value as $\beta\to0$. Nevertheless, the observed slack remains well below the universal $\log 2$ bound.\\

\noindent Finally, a search over one million randomly generated discrete target--proposal pairs on a state space of size $256$ does not yield substantially larger slack. This further supports our theoretical guarantee, suggesting that the optimal universal constant may be considerably smaller than $\log 2$.

\subsection{
\texorpdfstring{
Validating the Lower and Upper Bounds on
$D_{\mathrm{KL}}(p_N\Vert q)$
}{
Validating the Lower and Upper Bounds on KL(pN || q)
}
}
\input{plots/second_bounds}

\Cref{fig:discrete-ordering} compares
$D_{\mathrm{KL}}(p_N\Vert q)$,
$\mathbb{E}[\widehat D_N]$,
$D_{\mathrm{KL}}(p\Vert q)$, and the lower bound $L_{N,4}$
for four discrete examples on a state space of size $L=8$.
For each sample size $N$, we compute
$D_{\mathrm{KL}}(p_N\Vert q)$ exactly, while
$\mathbb{E}[\widehat D_N]$ and $L_{N,4}$ are estimated via Monte Carlo.
The top row shows the corresponding probability masses of $p$ and $q$,
while the bottom row shows the induced divergence
$D_{\mathrm{KL}}(p_N\Vert q)$, the lower bound $L_{N,4}$, the expected
SNIS estimate $\mathbb{E}[\widehat D_N]$, and the population divergence
$D_{\mathrm{KL}}(p\Vert q)$.

The first three examples represent mild mismatch, moderate mismatch, and a
rare-mode setting, respectively.
In these cases, both finite-sample quantities increase toward the population
KL divergence, with $\mathbb{E}[\widehat D_N]$ consistently larger than
$D_{\mathrm{KL}}(p_N\Vert q)$.
The lower bound $L_{N,4}$ also closely tracks
$D_{\mathrm{KL}}(p_N\Vert q)$ across the examples, despite using only
$M=4$ auxiliary samples.
This behavior is consistent with the relatively small state space in this
experiment, for which the auxiliary estimator can accurately capture the
finite-sample density ratio with only a modest number of samples. The final example illustrates a qualitatively different behavior:
$\mathbb{E}[\widehat D_N]$ is non-monotone and exceeds the population
divergence for intermediate $N$, reaching approximately $0.603$ at $N=6$
compared with
$D_{\mathrm{KL}}(p\Vert q)\approx 0.559$.
Overall, the lower and upper bounds closely bracket the exact finite-sample
divergence in these experiments, providing further empirical validation of
our theoretical guarantees across a range of target--proposal configurations.

\begin{figure}[t]
\centering

\begin{tikzpicture}
\begin{groupplot}[
    group style={
        group size=3 by 1,
        horizontal sep=0.85cm,
    },
    width=0.335\linewidth,
    height=5.0cm,
    xmode=log,
    log basis x={2},
    xmin=3.5,
    xmax=70,
    xtick={4,8,16,32,64},
    xticklabels={4,8,16,32,64},
    xlabel={$N$},
    ymin=-4.3,
    ymax=3.2,
    tick align=outside,
    tick pos=left,
    axis line style={black!60, line width=0.35pt},
    tick style={black!60, line width=0.35pt},
    label style={font=\small},
    tick label style={font=\scriptsize},
    title style={font=\small},
    ymajorgrids=true,
    xmajorgrids=true,
    grid style={black!12},
]

\nextgroupplot[
    title={Total compute $=4N$},
    ylabel={Estimated value},
]

\addplot+[
    blue!75!black,
    dashed,
    mark=none,
    line width=1.0pt,
]
coordinates {
    (4,0.576634)
    (8,1.056609)
    (16,1.484965)
    (32,1.765131)
    (64,1.895003)
};

\addplot+[
    orange!90!black,
    mark=*,
    mark size=1.4pt,
    line width=0.9pt,
    error bars/.cd,
        y dir=both,
        y explicit,
]
coordinates {
    (4,1.004149) +- (0,0.349657) (0,0.305104)
    (8,1.438043) +- (0,0.370289) (0,0.353814)
    (16,1.752576) +- (0,0.346390) (0,0.376465)
    (32,1.910492) +- (0,0.298451) (0,0.340934)
    (64,1.965665) +- (0,0.222076) (0,0.257304)
};

\addplot+[
    green!55!black,
    mark=square*,
    mark size=1.35pt,
    line width=0.9pt,
    error bars/.cd,
        y dir=both,
        y explicit,
]
coordinates {
    (4,0.358099) +- (0,4.209789) (0,1.009669)
    (8,0.581573) +- (0,3.027761) (0,1.429993)
    (16,1.199084) +- (0,2.333092) (0,1.266707)
    (32,1.622562) +- (0,2.080430) (0,1.144244)
    (64,1.838957) +- (0,2.047497) (0,1.029778)
};

\nextgroupplot[
    title={Total compute $=8N$},
    yticklabels=\empty,
]

\addplot+[
    blue!75!black,
    dashed,
    mark=none,
    line width=1.0pt,
]
coordinates {
    (4,0.576634)
    (8,1.056609)
    (16,1.484965)
    (32,1.765131)
    (64,1.895003)
};

\addplot+[
    orange!90!black,
    mark=*,
    mark size=1.4pt,
    line width=0.9pt,
    error bars/.cd,
        y dir=both,
        y explicit,
]
coordinates {
    (4,1.005482) +- (0,0.228459) (0,0.208874)
    (8,1.448267) +- (0,0.251492) (0,0.245999)
    (16,1.748970) +- (0,0.257200) (0,0.286496)
    (32,1.908011) +- (0,0.215488) (0,0.237087)
    (64,1.972651) +- (0,0.153729) (0,0.169588)
};

\addplot+[
    green!55!black,
    mark=square*,
    mark size=1.35pt,
    line width=0.9pt,
    error bars/.cd,
        y dir=both,
        y explicit,
]
coordinates {
    (4,0.350052) +- (0,2.622591) (0,0.931024)
    (8,0.579005) +- (0,1.995892) (0,1.257058)
    (16,1.223187) +- (0,1.560429) (0,1.029460)
    (32,1.654941) +- (0,1.336729) (0,0.858525)
    (64,1.848119) +- (0,1.212688) (0,0.792878)
};

\nextgroupplot[
    title={Total compute $=16N$},
    yticklabels=\empty,
    legend style={
        at={(0.98,0.03)},
        anchor=south east,
        font=\scriptsize,
        draw=none,
        fill=white,
        fill opacity=0.9,
        text opacity=1,
    },
]

\addplot+[
    blue!75!black,
    dashed,
    mark=none,
    line width=1.0pt,
]
coordinates {
    (4,0.576634)
    (8,1.056609)
    (16,1.484965)
    (32,1.765131)
    (64,1.895003)
};
\addlegendentry{$D_{\mathrm{KL}}(p_N\Vert q)$}

\addplot+[
    orange!90!black,
    mark=*,
    mark size=1.4pt,
    line width=0.9pt,
    error bars/.cd,
        y dir=both,
        y explicit,
]
coordinates {
    (4,1.004055) +- (0,0.161686) (0,0.156432)
    (8,1.443183) +- (0,0.189546) (0,0.182389)
    (16,1.752048) +- (0,0.187619) (0,0.185824)
    (32,1.910949) +- (0,0.156566) (0,0.164860)
    (64,1.968844) +- (0,0.113253) (0,0.122506)
};
\addlegendentry{Upper}

\addplot+[
    green!55!black,
    mark=square*,
    mark size=1.35pt,
    line width=0.9pt,
    error bars/.cd,
        y dir=both,
        y explicit,
]
coordinates {
    (4,0.384266) +- (0,1.654697) (0,0.796664)
    (8,0.591802) +- (0,1.343806) (0,1.011502)
    (16,1.224173) +- (0,1.051464) (0,0.803204)
    (32,1.633730) +- (0,0.939119) (0,0.681295)
    (64,1.833570) +- (0,0.884365) (0,0.646394)
};
\addlegendentry{Best lower}

\end{groupplot}
\end{tikzpicture}

\caption{
Upper and lower estimators under equal computational budgets for the
bimodal-target, unimodal-proposal experiment.
The dashed curve shows the finite-sample divergence
$D_{\mathrm{KL}}(p_N\Vert q)$.
Points and error bars show the empirical means and central 95\% sampling
intervals of the upper and lower estimators across independent repetitions.
For each $(N,\text{budget})$, the auxiliary sample size $M$ for the lower
estimator is chosen to minimize the empirical interval width.
}
\label{fig:equal-compute}
\end{figure}

\paragraph{Finite-Compute Behavior.}We next compare the finite-compute behavior of the upper and lower estimators in a discrete experiment with $1000$ states. The target $p$ and proposal $q$ are constructed by evaluating Gaussian
mixture and Gaussian densities, respectively, on an evenly spaced grid
over $[-8,8]$ and normalizing the resulting probability masses. The target $p$ is a symmetric bimodal Gaussian mixture with component means $\pm 2.75$ and standard deviation $0.55$, while the proposal $q$ is a centered Gaussian with standard deviation $1.30$. For this pair,
\(
D_{\mathrm{KL}}(p\Vert q)\approx 1.994.
\)
We consider $N\in\{4,8,16,32,64\}$. For each $N$, we approximate $p_N$ using $500{,}000$ independent
auxiliary draws of $(X_2,\ldots,X_N)\sim q^{N-1}$ and use the resulting
density-ratio estimate to compute a high-accuracy reference value of
$D_{\mathrm{KL}}(p_N\Vert q)$. We likewise estimate the reference value of $\mathbb E[\widehat D_N]$ using a large number of independent SNIS samples. To compare the two estimators at equal computational cost, we consider total proposal-sampling budgets of $4N$, $8N$, and $16N$. For the upper estimator, a budget of $KN$ permits averaging $K$ independent realizations of $\widehat D_N$, each constructed from $N$ proposals. For the lower estimator, each outer sample $Y\sim p_N$ requires an SNIS pool of size $N$, together with $M$ independent auxiliary pools of size $N-1$. Thus, under the same total budget $KN$, the number of affordable outer samples is
\[
B
=
\left\lfloor
\frac{KN}{N+M(N-1)}
\right\rfloor.
\]
For each $(N,K)$ pair, we consider several values of $M$ and select the one yielding the smallest empirical central 95\% sampling-interval width. The intervals are estimated from $50,000$ independent repetitions of the complete estimator.

\Cref{fig:equal-compute} shows a clear difference in finite-compute behavior. The upper estimator is comparatively stable: its empirical sampling intervals shrink as the computational budget increases, and its mean remains close to $\mathbb E[\widehat D_N]$. In contrast, the lower estimator exhibits substantially larger variability under the same proposal-sampling budget. This is particularly pronounced for small and moderate $N$, where the logarithm of the auxiliary density-ratio estimate can be highly variable. Increasing $M$ improves the accuracy of the auxiliary approximation for a fixed outer sample, but it also reduces the number of outer samples $B$ that can be afforded under a fixed budget. In most of the settings considered here, the smallest empirical interval is therefore attained at $M=1$, indicating that allocating computation to additional outer samples is more effective than using many auxiliary samples per output. As $N$ increases, both $D_{\mathrm{KL}}(p_N\Vert q)$ and $\mathbb E[\widehat D_N]$ approach the population divergence. For example, at $N=64$ we obtain
\[
D_{\mathrm{KL}}(p_N\Vert q)\approx 1.895,
\qquad
\mathbb E[\widehat D_N]\approx 1.969,
\]
compared with $D_{\mathrm{KL}}(p\Vert q)\approx1.994$. The lower estimator also becomes more concentrated as $N$ and the total budget increase, but its sampling intervals remain much wider than those of the upper estimator. These results illustrate the practical distinction between the two sides of \Cref{prop:2nd_result}: the lower bound provides a way to estimate the implicit divergence $D_{\mathrm{KL}}(p_N\Vert q)$ from below, but requires an additional nested Monte Carlo procedure, whereas the upper bound is obtained directly from the normalized importance weights already produced by SNIS.
\paragraph{Shifted-Gaussian setup.}
We provide an additional finite-compute comparison using the shifted-Gaussian setup from \Cref{sbon_exp}, considering increasing levels of target--proposal mismatch, shown in \Cref{fig:finite-compute-gaussian-shift}. The results again show that increasing the computational budget substantially reduces the sampling variability of both estimators, while the lower estimator is considerably more variable than the upperbound.

\input{plots/additional_lb_compute}
\subsection{Additional Experiments on Soft BoN Compute}\label{add_sbon_exp}
We provide additional scenarios following the setup in the second part of \Cref{sbon_exp}. Specifically, we consider the discretized distribution pairs summarized in \Cref{tab:distribution-pairs}. The corresponding distributions and the performance of Soft BoN across different compute budgets are shown in \Cref{fig:snis-mode-tail,fig:snis-skew-irregular}. Overall, we find that $K=16$ already provides a sufficiently accurate and stable estimate across the considered scenarios, while increasing to $K=64$ yields consistently tighter estimates that closely track the expected value.

\input{plots/compute_appendix}
\end{document}